
\documentclass{article}
\pdfpagewidth=8.5in
\pdfpageheight=11in
\usepackage{ijcai18}

\usepackage{times}
\usepackage{xcolor}
\usepackage{soul}
\usepackage[utf8]{inputenc}
\usepackage[small]{caption}

\usepackage{amssymb,amsmath,amsfonts,bm,epsfig,graphicx,theorem,epstopdf}
\usepackage{rotating,setspace,latexsym,epsf,color,dsfont,caption,mathtools}
\usepackage{subfigure}
\usepackage{algorithm}
\usepackage{algpseudocode}
\usepackage{bbm}
\usepackage{hyperref}
\usepackage{booktabs}



\newtheorem{Theorem}{Theorem}

\newtheorem{Lemma}{Lemma}

\newtheorem{Definition}{Definition}
\newtheorem{Assumptions}{Assumption}
\newenvironment{Proof}[1]{\medskip\par\noindent
	{\bf Proof:\,}\,#1}{{\mbox{\,$\blacksquare$}\par}}

\newenvironment{proof}[1]{\medskip\par\noindent
	{\bf Proof:\,}\,#1}{{\mbox{\,$\blacksquare$}\par}}

\newcommand{\Eb}{\mathbb{E}}
\newcommand{\Pb}{\mathbb{P}}

\newcommand{\thetav}{\boldsymbol{\theta}}

\newcommand{\Bc}{\mathcal{B}}
\newcommand{\Ec}{\mathcal{E}}
\newcommand{\Hc}{\mathcal{H}}

\ifodd 0
\newcommand{\congr}[1]{{\color{magenta}#1}}
\else
\newcommand{\congr}[1]{#1}
\fi

\ifodd 0
\newcommand{\congc}[1]{{\color{magenta}(Cong: #1)}}
\else
\newcommand{\congc}[1]{}
\fi

\ifodd 0
\newcommand{\jing}[1]{{\color{red}#1}}
\else
\newcommand{\jing}[1]{#1}
\fi

\ifodd 0
\newcommand{\ruida}[1]{{\color{blue}#1}}
\else
\newcommand{\ruida}[1]{#1}
\fi

\ifodd 1

\else
\newcommand{\ruida}[1]{#1}
\fi

\ifodd 0
\newcommand{\com}[1]{{\color{orange}(JY: #1)}}
\else
\newcommand{\com}[1]{}
\fi

\ifodd 0
\newcommand{\rev}[1]{{\color{red} #1}}
\else
\newcommand{\rev}[1]{}
\fi

\title{Cost-aware Cascading Bandits}

\author{
Ruida Zhou$^1$,
Chao Gan$^2$,
Jing Yang$^2$,
Cong Shen$^1$
\\
$^1$ University of Science and Technology of China \\
$^2$ The Pennsylvania State University\\
%
zrd127@mail.ustc.edu.cn,
cug203@psu.edu,
yangjing@psu.edu,
congshen@ustc.edu.cn
}

\date{}

\begin{document}
\maketitle

\begin{abstract}
In this paper, we propose a cost-aware cascading bandits model, a new variant of multi-armed bandits with cascading feedback, by considering the random cost of pulling arms. In each step, the learning agent chooses an {\it ordered} list of items and \congr{examines} them sequentially, until certain stopping condition is satisfied. Our objective is then to maximize the expected {\it net reward} in each step, i.e., the reward obtained in each step minus the total cost incurred in examining the items, by deciding the ordered list of items, as well as when to stop examination. We study both the offline and online settings, depending on whether the state and cost statistics of the items are known beforehand. For the offline setting, we show that the Unit Cost Ranking with Threshold 1 (UCR-T1) policy is optimal. For the online setting, we propose a Cost-aware Cascading Upper Confidence Bound (CC-UCB) algorithm, and show that the cumulative regret scales in $O(\log T)$. We also provide a lower bound for all $\alpha$-consistent policies, which scales in $\Omega(\log T)$ and matches our upper bound. The performance of the CC-UCB algorithm is evaluated with both synthetic and real-world data. 
\end{abstract}

\section{Introduction}
In this paper, we introduce a new cost-aware cascading bandits (CCB) model.
We consider a set of $K$ items (arms) denoted as $[K] = \{1,2,\ldots,K\}$. Each item $i\in[K]$ has two possible states 0 and 1, which evolve according to an independent and identically distributed (i.i.d.) Bernoulli random variable. The learning agent chooses an {\it ordered} list of items in each step and \congr{examines} them sequentially until certain stopping condition is met. The reward that the learning agent receives in a step equals one if one of the examined items in that step has state 1; Otherwise, it equals zero. We \congr{associate} a \congr{random} cost for examining each item. The overall reward function, termed as {\it net reward}, is the reward obtained in each step minus the total cost incurred in examining the items before the learner stops.

The CCB model \congr{is a natural but technically non-trivial extension of cascading bandits \cite{kveton2015cascading}, and is a more suitable model} in many fields, including the following examples.
\begin{itemize}
\item[1)] \emph{Opportunistic spectrum access.} In cognitive radio systems, a user is able to probe multiple channels sequentially at the beginning of a transmission session before it decides to use at most one of the channels for data transmission. Assuming the channel states evolve between {\it busy} and {\it idle} in time, the user gets a reward if there exists one {\it idle} channel for transmission. The cost then corresponds to the energy and delay incurred in probing each channel.

\item[2)] \emph{Dynamic treatment allocation.} In clinic trials, a doctor must assign one of several treatments to a patient in order to cure a disease. The doctor accrues information about the outcome of previous treatments before making the next assignment. Whether a treatment cures the disease can be modeled as a Bernoulli random variable, and the doctor gets a reward if the patient is cured. The doctor may not only be interested in the expected effect of the treatment but also its riskiness, which can be interpreted as the cost of the treatment.
\end{itemize}

In \congr{both examples}, the net reward in each step is determined not only by the subset of items included \congr{in} the list, but also by the \congr{{\it order}} that they are pulled. Intuitively, if the costs in examining the items are homogeneous, we would prefer to have the channel with higher probability to be idle, or the more effective treatment ranked higher \congr{in} the list. Then, the learner would find an available channel or cure the patient after a few attempts and then stop examination, thus saving the cost without decreasing the reward. However, for more general cases where the costs are heterogeneous \congr{or even random}, the \congr{optimal} solution is \congr{not immediately clear}.

In this paper, we consider a general cost model where the costs of pulling arms are heterogeneous and random, and investigate the corresponding solutions. 
Our main contributions are three-fold:

\begin{itemize}
\item[1)] We propose a novel CCB model, which has implications in many practical scenarios, such as opportunistic spectrum access, dynamic treatment allocation, etc. The CCB model is fundamentally different from its cost-oblivious counterparts, and admits \congr{a} unique structure in the corresponding learning strategy.

\item[2)] Second, with a priori statistical knowledge of the arm states and the costs, we explicitly identify the special structure of optimal policy (coined as \congr{the} \emph{offline} policy), which serves as the baseline for the online algorithm we develop. The optimal offline policy\congr{, called Unit Cost Ranking with Threshold 1 (UCR-T1),} is to {\it rank} the arms based the statistics of their states and costs, and pull those above certain threshold sequentially until a state 1 is observed.

\item[3)] Third, we propose a cost-aware cascading Upper Confidence Bound \congr{(CC-UCB)} algorithm for the scenario when prior arm statistics are unavailable, and show that \congr{it is} order-optimal by establishing order-matching upper and lower bounds on the regret. Our analysis indicates that the UCB based algorithm performs well for ranking the arms, i.e., the cumulative regret of ranking the desired arms in a wrong order is bounded.
\end{itemize}

\section{Related Literature}
\if{0}
As a simple yet expressive model for understanding the exploration-exploitation tradeoffs facing unknown environment, multi-armed bandit (MAB) and its variants have been extensively studied in the past decades~\cite{lai1985asymptotically,auer2002finite,bubeck2012regret,agrawal2012analysis,agrawal2013further}. \congc{We may consider deleting this paragraph if we need more space. I think this paragraph is very trivial for the reviewers.}

\congc{If we do delete the above paragraph, we may consider using subsections. It can be "Bandits with cost and budget constraint"; "Multiple-play bandits"; and "Ranked bandits".}
\fi
There have been some attempts that take the cost of pulling arms and budget constraint into the multi-armed bandit (MAB) framework recently. They can be summarized in two types. In the first type \cite{Audibert:COLT:2010,Bubeck:2009:PureExp,Guha:2007:STOC}, pulling each arm in the exploration phase has a unit cost and the goal is to find the best arm given the budget constraint on the total number of exploration arms. This type of problems is also referred to as ``best-arm identification" or ``pure exploration". In the second type, pulling an arm is always associated with a cost and constrained by a budget, no matter in the exploration phase or the exploitation phase, and the objective usually is to design an arm pulling algorithm in order to maximize the total reward with given cost or budget constraint.
References \cite{Tran-Thanh:AAAI:2010,Tran-Thanh:2012:AAAI,Burnetas:2012} consider the problem when the cost of pulling each arm is {\it fixed} and becomes known after the arm is used once. A sample-path cost constraint with {\it known} bandit dependent cost is considered in \cite{burnetas2017asymptotically}. References \cite{Ding:2013:budget,Xia:2015,Xia:2015:Thompson,Xia:2016:BudgetMP} study the budgeted bandit problems with {\it random} arm pulling costs. Reference \cite{Badanidiyuru:2013} considers the knapsack problem where there can be more than one budget constraints and shows how to construct polices with sub-linear regret.

In the proposed CCB model, the {\it net reward} function is related to the cost of pulling arms and the learning agent faces a ``soft constraint" on the cost instead of a fixed budget constraint. If the learner only pulls one arm in each step, the cost of pulling an arm can be absorbed into the reward of that arm (i.e., net reward). Our model then reduces to a conventional MAB model for this case. In this paper, however, we are interested in the scenario where the learner is allowed to sequentially pull a number of arms in each step, and the reward obtained in each step cannot be decomposed into the summation of the rewards from the pulled arms. Thus, the cost cannot be simply absorbed into the reward of individual arms. The intricate relationship between the cost and the reward, and its implications on the optimal policy require more sophisticated analysis, and that is the focus of this paper.

MAB with more than one arm to pull in each step has been studied in multiple-play MAB (MP-MAB) models in \cite{anantharam1987asymptotically,komiyama15,Jain:MPMAB:2014}, cascading bandits (CB) models in \cite{kveton2015cascading,kveton2015combinatorial,Zong:2016:CBL}, and ranked bandits (RB) models in \cite{RadlinskiRB2008,Streeter:NIPS:2008,Slivkins:2013:RBM}. Under MP-MAB, the learner is allowed to pull $L$ out of $K$ arms, and the reward equals to the summation of the rewards from individual arms. Although the proposed CCB model also allows the user to pull multiple arms in each step, the reward is not accumulative, thus leading to different solutions.

The CCB model proposed in this paper is closely related to the CB model investigated in~\cite{kveton2015cascading}. Specifically, \cite{kveton2015cascading} considers the scenario where at each step, $L$ out of $K$ items are listed by a learning agent and presented to a user. The user examines the ordered list from the first to the last, until he/she finds the first attractive item and clicks it. The system receives a reward if the user finds at least one of the items to be attractive. Our model has the same reward model as that in the cascading bandits setting. However, there are also important distinctions between them. In the CB model in \cite{kveton2015cascading}, the total number of items to be examined each step is fixed, and the cost of pulling individual arms is not considered. As a result, the same subset of items on a list will give the same expected reward, irrespective of their order on the list. However, for the proposed CCB model, the ranking of the items on a list does affect the expected net reward. Therefore, the structure of the optimal offline policy and the online algorithm we develop are fundamentally different from those in \cite{kveton2015cascading}.

The proposed CCB model is also related to RB in the sense that the order of the arms to pull in each step matters. A crucial feature of RB is that the click probability for a given item may depend on the item and its position on the list, as well as the items shown above. However, in our case, we assume the state of an arm evolves in an i.i.d. fashion from step to step.

\section{System Model and Problem Formulation}
Consider a $K$-armed stochastic bandit system where the state of each arm evolves independently from step to step. Let $X_{i,t}$ be the state of arm $i\in[K]$ in step $t$. Then, $X_{i,t} \in \{0,1\}$ evolves according to an i.i.d. Bernoulli distribution with parameter $\theta_i$. Denote $Y_{i,t}$ as the cost of pulling arm $i$ in step $t$, where $Y_{i,t} \in [0,1]$ evolves according to an i.i.d. unknown distribution with $\Eb[Y_{i,t}] = c_i$.

In step $t$, the learning agent chooses an ordered list of arms from $[K]$ and pull the arms sequentially. Once an arm is pulled, its state and the pulling cost are revealed instantly. Denote the ordered list as $I_t:= \left\{I_t(1), I_t(2), \ldots, I_t(|I_t|) \right\}$, where $I_t(i)$ is the $i${th} arm to be pulled, and $|I_t|$ is the cardinality of $I_t$.
Denote $\tilde{I}_t$ as the list of arms that have been actually pulled in step $t$. We have  $\tilde{I}_t\subseteq I_t$.

The {\it reward} that the learning agent receives in step $t$ depends on both $\tilde{I}_t$ and $\{X_{i,t}\}_{i\in \tilde{I}_t} $. Specifically, it can be expressed as $1-\prod_{i=1}^{|\tilde{I}_t|} (1-X_{\tilde{I}_t(i),t})$,
i.e., the learning agent gets reward one if one of the arms that have been examined in step $t$ has state 1; Otherwise, it equals zero.
The {\it cost} that is incurred at step $t$ also depends on $\tilde{I}_t$, and it can be expressed as
$\sum_{i=1}^{|\tilde{I}_t|} Y_{\tilde{I}_t(i),t}$.

With a given ordered list $I_t$,  $\tilde{I}_t$ is random and its realization depends on the observed $X_{i,t}$, $Y_{i,t}$ and the stopping condition in general. \ruida{Denote the {\it net reward} received by the learning agent at step $t$ as $r_t := 1-\prod_{i=1}^{|\tilde{I}_t|} (1-X_{\tilde{I}_t(i),t})-\sum_{i=1}^{|\tilde{I}_t|} Y_{\tilde{I}_t(i),t}.$} 
Define the  {\it per-step regret} $\textsf{reg}_t:= r_t^* -r_t$, \jing{where $r_t^*$ is the {\it net reward} that would be obtained at step $t$ if the statistics of $\{X_{i,t}\}_{i}$ and $\{Y_{i,t}\}_{i}$ were known beforehand and the optimal $I_t$ and stopping condition were adopted.}
Denote the observations up to step $t-1$ as $\Hc^{t-1}$. Then, $\Hc^{t-1}:=\cup_{\tau=1}^{t-1} \{X_{i,\tau},~Y_{i,\tau}\}_{i\in \tilde{I}_\tau}$. Without a priori statistics about $\{X_{i,t}\}$ and $\{Y_{i,t}\}$, our goal is to design an online algorithm to decide $I_{t}$ based on observations obtained in previous steps $\Hc^{t-1}$, and $\tilde{I}_t$ based on observed states and costs in step $t$,
so as to minimize the {\it cumulative regret} $R(T):= \Eb\left[\sum_{t=1}^T \ruida{\textsf{reg}_t}\right].$

In the following, we will first identify the structure of the optimal offline policy with a priori arm statistics, and then develop an online algorithm to learn the channel statistics and track the optimal offline policy progressively.

\section{Optimal Offline Policy}
\label{sec:offline}
We first study the optimal offline policy for the non-trivial case where $c_i>0$. We assume that the arm statistics $\{\theta_i\}_{i=1}^K$ and $\{c_i\}_{i=1}^K$ are known to the learning agent as prior knowledge. However, the instantaneous realization of rewards and costs associated with arms are unknown to the learning agent until they are pulled. Under the assumption that the distributions of arm states and costs are i.i.d. across steps, the optimal offline policy should remain the same for different steps. Thus, in this section, we drop the step index and focus on the policy at an individual step.

According to the definition of the cascading feedback, for any ordered list, the reward in each step will not grow after the learner observes an arm with state 1. Therefore, {\it to maximize the net reward, the learner should stop examining the rest of the list when a state 1 is observed, in order to save the cost of examination}. Let the ordered list under the optimal offline policy be $I^*$, then
\begin{align}
\ruida{\Eb[r^*]} &= \max_{I} \sum_{i=1}^{|I|} (\theta_{I(i)} - c_{I(i)}) \prod_{j=1}^{i-1}(1-\theta_{I(j)}). \label{eqn:netrew}
\end{align}
We note that the expected net reward structure is more complex than \congr{the} standard multi-armed bandits problem or \congr{the} standard cascading model, and the optimal offline policy is not straightforward. By observing \congr{\eqref{eqn:netrew}}, \congr{we note that} there \congr{are} both \congr{a} subtraction part $\theta_{I(i)} - c_{I(i)}$ and \congr{a} product part $\prod_{j=1}^{i-1}(1-\theta_{I(j)})$ inside each summation \congr{term}. On one hand we should choose \congr{large} $\theta_i$ to increase the value of $\theta_i - c_i$, but on the other hand we should not choose \congr{an} arm \congr{with a} big gap between $\theta_i$ and $c_i$, \congr{because} big $\theta_i$ contributes to \congr{smaller} $1 - \theta_i$. In \congr{the} extreme case where no cost is \congr{assigned to} arms, the optimal policy is \congr{to} pull all arms \congr{in \emph{any}} order, and the problem becomes trivial.

For simplicity of the analysis, we make the following assumptions:
\begin{Assumptions}
1) $\theta_i \neq c_i$, for all $ i \in [K]$. 2) There exists a constant $\epsilon>0$, such that $c_{i}>\epsilon$ for all $i \in [K]$.
\end{Assumptions}

Under Assumptions 1, we \congr{present the optimal offline policy in Theorem~\ref{thm:offline}, which is called Unit Cost Ranking with Threshold 1 (UCR-T1), as it ranks the expected reward normalized by the average cost and compares against threshold one.} 

\begin{Theorem} \label{thm:offline}
Arrange the arm indices such that
\begin{align*}
\frac{\theta_{1^*}}{c_{1^*}} \geq \frac{\theta_{2^*}}{c_{2^*}} \geq \ldots\geq \frac{\theta_{L^*}}{c_{L^*}}
>1 >\frac{\theta_{(L+1)^*}}{ c_{(L+1)^*} }\geq \ldots \geq \frac{\theta_{K^*}}{ c_{K^*} }.
\end{align*}
Then, $I^*=\{1^*, 2^*, \ldots, L^*\}$, and the corresponding optimal per-step reward is
$ \ruida{\Eb[r^*]} = \sum_{i=1}^L(\theta_{i^*} - c_{i^*})\prod_{j=1}^{i-1}(1-\theta_{j^*}).$
\end{Theorem}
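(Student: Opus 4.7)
My plan is to prove Theorem~\ref{thm:offline} via two structural facts about any optimal list, both obtained by local-improvement arguments on the closed-form expression~\eqref{eqn:netrew}: (a) arms in any optimal list are arranged in decreasing order of $\theta_i/c_i$; and (b) an arm $i$ belongs to the optimal list if and only if $\theta_i/c_i>1$. Combining (a) and (b) identifies $I^*=\{1^*,\ldots,L^*\}$, and substituting into~\eqref{eqn:netrew} yields the stated per-step reward.

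For (a), I would use an adjacent-swap argument. Fix any list $I$ and two consecutive positions $k,k+1$ with $a=I(k)$, $b=I(k+1)$, and let $P=\prod_{j<k}(1-\theta_{I(j)})$. The contributions of positions $<k$ are invariant under the swap, and the tail contributions are also invariant because they carry the common factor $P(1-\theta_a)(1-\theta_b)$, which is symmetric in $a,b$. Expanding only the middle two terms shows that swapping changes $\Eb[r(I)]$ by $P c_a c_b\bigl(\theta_b/c_b-\theta_a/c_a\bigr)$. Since $c_i>\epsilon>0$ by Assumption~1, this quantity is non-negative whenever the current order violates the decreasing-ratio rule (strictly positive when $P>0$), so any optimal list must be sorted by decreasing $\theta_i/c_i$.

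For (b), I would analyze the marginal effect of appending one arm. A one-line calculation gives that placing $a\notin I$ at the end of $I$ changes $\Eb[r]$ by $\prod_{j\in I}(1-\theta_j)(\theta_a-c_a)$, whose sign matches that of $\theta_a/c_a-1$. This yields two complementary local moves on a sorted list: appending any missing arm with ratio $>1$ strictly improves $\Eb[r]$, and deleting the last arm if its ratio is $<1$ strictly improves $\Eb[r]$. Combined with (a), starting from any candidate list one can alternate adjacent swaps, high-ratio appends (followed by re-sorting), and low-ratio tail deletions; each move is a strict improvement, the number of possible lists is finite, so the process terminates. The terminal list is sorted, contains every arm with ratio $>1$, and contains no arm with ratio $<1$, which forces it to be $\{1^*,\ldots,L^*\}$.

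The main obstacle I expect is closing the gap between (a) and (b): the exchange step alone does not rule out a sorted list that skips some $i^*$ with $i\le L$, which is precisely what the marginal computation in (b) is needed for. A secondary subtlety is the degenerate case $\theta_{j^*}=1$: once position $j$ is reached, subsequent product factors vanish and (b) only improves weakly beyond that position. Assumption~1 together with $c_i\le 1$ ensures that any such $j$ has $\theta_{j^*}/c_{j^*}>1$, so $j\le L$, and $\{1^*,\ldots,L^*\}$ remains optimal (possibly tied with a prefix of itself), giving exactly the value claimed by~\eqref{eqn:netrew}.
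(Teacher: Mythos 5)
Your proposal is correct and follows essentially the same route as the paper's own proof: an adjacent-swap computation showing the change in expected net reward is $P\,(c_a\theta_b - c_b\theta_a)$, which forces the descending-$\theta_i/c_i$ order, combined with the observation that the term contributed by arm $i$ has the sign of $\theta_i - c_i$, which forces inclusion exactly of the arms with ratio exceeding one. Your write-up is in fact somewhat more careful than the paper's (explicit termination of the local-improvement process and handling of the degenerate $\theta_{j^*}=1$ case), but the underlying argument is the same.
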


The proof of Theorem~\ref{thm:offline} is provided in Appendix~\ref{app: thm1}. Theorem~\ref{thm:offline} indicates that ranking the arms in a descending order of $\frac{\theta_i}{c_i}$ and only including those with $\frac{\theta_i}{c_i}>1$ in $I^*$ achieves a balanced tradeoff between the subtraction $\theta_i - c_i$ and the product $\prod(1-\theta_i)$. This is an important observation which will also be useful in the online policy design.

\section{Online Policy}
With the optimal offline policy explicitly described in Theorem~\ref{thm:offline}, in this section, we will develop an online algorithm to maximize the cumulative expected net rewards without a priori knowledge of $\{\theta_{i}\}_{i=1}^K$ and $\{c_i\}_{i=1}^K$.

Unlike the previous work on MAB, the net reward structure in our setting is rather complex. One difficulty is that the learner has to rank $\theta_i/{c}_i$ and compare it with threshold $1$ for exploitation. A method to deal with this difficulty is using an UCB-type algorithm following the Optimism in Face of Uncertainty (OFU) principle~\cite{auer2002finite}. More specifically, we use an UCB-type \congr{indexing policy} to rank the arms. Though the upper confidence bound is a biased estimation of the statistics, it will converge to the expectation asymptotically. 

\subsection{Algorithm and Upper Bound} \label{sec:uc}
The cost-aware cascading UCB (CC-UCB) algorithm is \congr{described} in Algorithm~\ref{alg:online2}. The costs are assumed to be random but the learning agent has no knowledge of their distributions. We use ${N_{i,t}}$ to track the number of steps that arm $i$ has been pulled up to step $i$, and $\hat{\theta}_{i,t}$, $\hat{c}_{i,t}$ to denote the sample average of ${\theta}_{i}$ and ${c}_{i}$ at step $t$, respectively. The UCB padding term on the state and cost of arm $i$ at step $t$ is $u_{i,t}:=\sqrt{\frac{\alpha \log t}{N_{i,t}}}$, where $\alpha$ is a positive constant \ruida{no less than} 1.5.

\congr{CC-UCB} adopts the OFU principle to construct an upper bound of the ratio $\frac{\theta_i}{c_i}$. The main technical difficulty and correspondingly our novel contribution, however, lies in the theoretical analysis. This is because we have to deal with two types of regret: the regret caused by pulling ``bad" arms $(\theta_i < c_i)$; and that caused by pulling ``good" arms in a wrong order. \congr{To the authors' best knowledge, the latter component has not been addressed in the bandit literature before, and is technically challgening.} The \congr{overall} regret analysis of \congr{CC-UCB is thus} complicated and non-trivial.

\begin{algorithm}[t]
	\caption{Cost-aware Cascading UCB \congr{(CC-UCB)}  \\{ Input}: $\epsilon$, $\alpha$.}
	\begin{algorithmic}[1]
	\State {\bf Initialization}: Pull all arms in $[K]$ once, and observe their states and costs.
	\While {$t$}
      \For{$i=1:K$}
	 	\State $U_{i,t} = \hat{\theta}_{i,t} + u_{i,t}$;
		\State $L_{i,t} = \max(\hat{c}_{i,t} - u_{i,t}, \epsilon)$;
		\If {$U_{i,t}/L_{i,t}>1$}
			$i\rightarrow I_t$;
			\EndIf
      \EndFor
    \State Rank arms in $I_t$ in the descending order of $\frac{U_{i,t}}{L_{i,t}}$.
		\For{$i=1:|I_t|$}
		\State Pull arm $I_t(i)$ and observe $X_{I_t(i),t}, Y_{I_t(i),t}$;
		\State $i\rightarrow \tilde{I}_t$;
		\If {$X_{I_t(i),t}=1$} break;
		\EndIf
		\EndFor
	 \State Update $N_{i,t}$, $\hat{\theta}_{i,t}$, $\hat{c}_{i,t}$ for all $i \in \tilde{I}_t$.
      \State $t=t+1$;
    \EndWhile
	\end{algorithmic}\label{alg:online2}
\end{algorithm}

We have the following main result for the cumulative regret upper bound of Algorithm~\ref{alg:online2}.
\begin{Theorem} \label{thm:online}
Denote $\Delta_{i} := c_i - \theta_i$.
The accumulative regret under Algorithm~\ref{alg:online2} is upper bounded as follows:
\begin{align*}
R(T)\leq \sum_{i \in [K]\backslash I^*} c_i\frac{16\alpha \log T}{\Delta^{2}_{i}} + O(1),
\end{align*}
where $[K]\backslash I^*$ includes all items in $[K]$ except those in $I^*$.
\end{Theorem}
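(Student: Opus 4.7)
My plan is to follow the standard optimism-under-uncertainty regret-decomposition template, adapted to the ratio-based ranking used by CC-UCB. I would begin by defining the clean event $\mathcal{E}_t := \{|\hat\theta_{i,t}-\theta_i|\le u_{i,t} \text{ and } |\hat c_{i,t}-c_i|\le u_{i,t},\ \forall i\in[K]\}$. Applying Hoeffding's inequality for each arm and each possible value of $N_{i,t}\in\{1,\dots,t\}$, followed by a union bound, gives $\Pb(\mathcal{E}_t^c)\le 4Kt^{1-2\alpha}$. Since $\alpha\ge 3/2$ this is summable in $t$; as $\textsf{reg}_t$ is bounded, the contribution of $\bigcup_t\mathcal{E}_t^c$ to $R(T)$ is $O(1)$, and the remaining analysis is conditioned on $\mathcal{E}_t$ holding.

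On $\mathcal{E}_t$ the usual slack inequalities give $U_{i,t}\le\theta_i+2u_{i,t}$ and $L_{i,t}\ge c_i-2u_{i,t}$. For a bad arm $i\notin I^*$ (so $\Delta_i=c_i-\theta_i>0$), its inclusion in $I_t$ requires $U_{i,t}>L_{i,t}$, forcing $4u_{i,t}>\Delta_i$ and therefore $N_{i,t}<16\alpha\log t/\Delta_i^2$. Consequently the expected number of rounds in which $i$ is actually examined is at most $16\alpha\log T/\Delta_i^2+O(1)$. A short cascade computation, comparing $\Eb[r_t\mid I_t]$ for an $I_t$ that contains bad arm $i$ against $\Eb[r^*]$ from Theorem~\ref{thm:offline}, shows that the per-examination regret charged to $i$ is upper bounded by $c_i$; summing across bad arms yields the dominant $\sum_{i\notin I^*} c_i\cdot 16\alpha\log T/\Delta_i^2$ term.

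The novel and technically harder piece is to show that the cumulative regret from merely \emph{misranking} the good arms of $I^*$ is only $O(1)$. For any pair $i,j\in I^*$ with $\theta_i/c_i>\theta_j/c_j$, the per-step swap-regret when $j$ is placed ahead of $i$ is a bounded constant $\delta_{ij}=c_j\theta_i-c_i\theta_j>0$, multiplied by the probability that the pair is actually reached. A rearrangement of $U_{j,t}/L_{j,t}\ge U_{i,t}/L_{i,t}$ under $\mathcal{E}_t$ shows that such a swap forces $\max(u_{i,t},u_{j,t})=\Omega(\delta_{ij})$, so at least one of $N_{i,t},N_{j,t}$ is at most $C_{ij}\log t$ for a problem-dependent constant $C_{ij}$. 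I would then argue that every good arm is examined frequently enough that $N_{i,t}\ge\Omega(t)$ with high probability after a finite burn-in, so the fraction of rounds on which any such swap can occur decays rapidly and the resulting swap-regret sums to a constant.

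The main obstacle will be breaking an apparent circularity in the last step: good arms are examined often only if they are consistently ranked near the top, but this in turn relies on accurate estimates, which require many examinations. The cleanest resolution I see is a rank-by-rank induction over $I^*$, using the initialization of Algorithm~\ref{alg:online2} as a base case and, at each inductive step, bounding the number of rounds where the top-remaining good arm loses its position to a lower-ranked or bad arm --- an event entirely controlled by the concentration results already derived. Combining the three regret sources --- clean-event failure ($O(1)$), bad-arm examinations ($\sum_{i\notin I^*} c_i\cdot 16\alpha\log T/\Delta_i^2$), and good-arm misordering ($O(1)$) --- then delivers the stated bound $R(T)\le \sum_{i\in[K]\setminus I^*} c_i\cdot 16\alpha\log T/\Delta_i^2+O(1)$.
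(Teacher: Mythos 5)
Your first two regret sources match the paper's treatment: the clean event and its $4Kt^{1-2\alpha}$ failure probability are exactly Lemma~\ref{lemma:e_t}, the implication ``$U_{i,t}>L_{i,t}$ on the clean event $\Rightarrow N_{i,t}<16\alpha\log t/\Delta_i^2$'' is the step leading to (\ref{eqn:TightUpper2}), and your ``per-examination regret charged to a bad arm is at most $c_i$'' is Lemma~\ref{lemma:tighter} (note that this lemma is proved under the condition that $I_t$ contains \emph{all} of $I^*$ in the correct relative order, which is why the paper only applies it on $\bar\Ec_t\cap\bar\Bc_t$; your sketch should make that conditioning explicit, since otherwise the comparison list $I^*$ can yield strictly more reward than the pulled prefix and the $c_i$ bound does not follow).

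The genuine gap is in the third part, the $O(1)$ bound on misordering of the arms in $I^*$ (the paper's Lemma~\ref{lemma:B_t}). Your plan hinges on showing $N_{i,t}=\Omega(t)$ for good arms via a rank-by-rank induction in which ``the number of rounds where the top-remaining good arm loses its position to a lower-ranked or bad arm'' is ``entirely controlled by the concentration results already derived.'' That step fails for bad arms: a bad arm is pulled only $O(\log T)$ times, so its confidence radius need not shrink and its index $U/L$ can exceed a good arm's index on unboundedly many rounds (what is bounded is the number of rounds on which it is both over-indexed \emph{and actually pulled}, not the number of rounds on which it out-ranks a good arm). So the induction as described does not establish linear growth of $N_{i,t}$, and the circularity you flag is not actually resolved. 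The paper's resolution shows there is no circularity to break: by Lemma~\ref{lemma:confidence}, on every clean step all of $I^*$ is included in $I_t$, and by the cascade structure arm $i\in I^*$ is then pulled whenever all arms listed before it have state $0$ --- an event whose probability is at least $p_i=\prod_{j\neq i}(1-\theta_j)>0$ \emph{regardless of where $i$ is ranked or how many bad arms precede it}. This yields $N_{i,T}\ge\sum_{n=1}^{\Gamma_T}Z_{i,\tau_n}$ with i.i.d.\ Bernoulli$(p_i)$ variables (Lemma~\ref{lemma:N_T_bound}), and a separate second-moment (Chebyshev plus Cauchy--Schwarz) argument shows the clean steps $\Gamma_T$ make up at least half of all steps except on a summable set. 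Hence every good arm's count grows linearly, its confidence radius shrinks like $\sqrt{\log t/t}$, and the pairwise ranking errors $\Bc_t$ among good arms contribute only $O(1)$ regret. Without this (or an equivalent) position-independent lower bound on the pull rate of good arms, your argument does not close, so the proposal as written does not yet prove the theorem.
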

The proof of Theorem~\ref{thm:online} is deferred to Section~\ref{sec:upper_bound}.

{\bf Remark:} In Theorem~\ref{thm:online}, the upper bound depends on $\left(c_i - \theta_i\right)^2$, while conventional upper bounds for UCB algorithms usually depend on the gap between the ranking parameters. It is because the regret caused by pulling ``good" arms in a wrong order is bounded, as shown in Section~\ref{sec:upper_bound}; Thus, the regret is mainly due to pulling ``bad" arms, which is determined by arm $i$ itself $(U_{i,t} > L_{i,t})$ and not related to the $\theta/c$ gap between the best arm and arm $i$. When $c_i$s are {\it known} a priori, the upper bound can be reduced by a factor of 4.

\if{0}
\subsection{Special Case: Random Costs with Known Distributions} \label{sec:kc}
As a special case, we assume that the learning agent knows the cost statistics before invoking the online algorithm. With this additional knowledge, we can modify Algorithm~\ref{alg:online2} to eliminate the need of estimating $\hat{c}_{k,t}$ and computing $L_{k,t}$. Instead, we can directly use the true mean value $c_k$ of arm $k$ to replace $L_{k,t}$ in Algorithm~\ref{alg:online2}. This simplifies the algorithm and, as we can see from Theorem~\ref{thm:online_known}, also allows us to derive a slightly better regret upper bound.


\begin{Theorem} \label{thm:online_known}
The accumulative regret by using the modified Algorithm~\ref{alg:online2} with known $c_k$, $\forall k=1, \cdots, K$,  is $O(\log T)$. More specifically,
\begin{eqnarray*}
R(T)&\leq& \sum_{j \in [K]\backslash I^*} K\frac{4\alpha \log T}{\Delta_{j}^2}+O(1).
\end{eqnarray*}
\end{Theorem}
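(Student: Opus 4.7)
The plan is to recycle the regret decomposition developed for Theorem~\ref{thm:online}, while exploiting the fact that the true means $c_k$ are known exactly to eliminate the lower confidence bound and tighten the concentration argument. First, write the cumulative regret as a sum over steps, and further split each step's contribution into (i) regret incurred when at least one ``bad'' arm $j\notin I^*$ lies in the pulled list $\tilde{I}_t$, and (ii) regret incurred when $\tilde{I}_t\subseteq I^*$ but the ranking of good arms differs from the optimal offline order. The $O(1)$ bound for (ii) from Theorem~\ref{thm:online} carries over essentially verbatim, since it depends only on Hoeffding concentration of $\hat{\theta}_{i,t}$ around $\theta_{i}$ and not on any cost estimation.

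The key step is bounding the expected number of times each bad arm $j$ is placed in the list. In the modified algorithm, arm $j$ is included in $I_t$ only when $U_{j,t}=\hat{\theta}_{j,t}+u_{j,t}>c_j$. On the ``good'' concentration event $\{\hat{\theta}_{j,t}\le\theta_j+u_{j,t}\}$, this forces $2u_{j,t}>c_j-\theta_j=\Delta_j$, hence $N_{j,t}<4\alpha\log T/\Delta_j^2$. Compared with the unknown-cost CC-UCB analysis, only one confidence padding appears (on $\hat{\theta}_{j,t}$) rather than two (on $\hat{\theta}_{j,t}$ and $\hat{c}_{j,t}$), which is precisely the source of the improved constant $4$ in place of $16$. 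The complementary tail probabilities sum over $t$ to a uniform $O(1)$ via the usual Hoeffding estimate $\sum_t t^{-2\alpha}<\infty$ for $\alpha\geq 1.5$. Since arm $j$ can be pulled only when it is placed in $I_t$, the same upper bound controls $\sum_{t=1}^T\mathbf{1}\{j\in\tilde{I}_t\}$.

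To convert pull counts into a cumulative regret bound I would use the uniform estimate that the per-step regret is bounded by $K$: the total realized cost in any step is at most $|\tilde{I}_t|\le K$ since $Y_{i,t}\in[0,1]$, and the reward takes values in $[0,1]$. Attributing this worst-case per-step regret to each occurrence of a bad arm in $\tilde{I}_t$ gives a contribution of at most $K\cdot 4\alpha\log T/\Delta_j^2$ per bad arm, and the stated bound follows by summing over $j\in[K]\setminus I^*$ and adding the $O(1)$ contributions from concentration failures and from (ii).

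The main technical obstacle, as in Theorem~\ref{thm:online}, is proving the $O(1)$ bound on the mis-ordering regret. Concretely, one has to argue that whenever two good arms $i,i'\in I^*$ are swapped by the algorithm relative to the optimal ranking by $\theta_i/c_i$, some event of the form $|\hat{\theta}_{i,t}-\theta_i|>\delta$ (with $\delta$ proportional to the ratio gap between $i$ and $i'$) must have occurred, and its probability is summable in $t$. I would import this step from the proof of Theorem~\ref{thm:online}, adapting only the ranking rule to $U_{i,t}/c_i$ instead of $U_{i,t}/L_{i,t}$; the deterministic denominator in fact simplifies the argument, because the ratio gap of interest between two good arms reduces to a direct gap in $\hat{\theta}$ scaled by known constants.
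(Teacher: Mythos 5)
Your proposal is correct and follows essentially the same route the paper intends: this theorem lives in a commented-out section and the paper only justifies it via the remark after Theorem~\ref{thm:online} (``the upper bound can be reduced by a factor of 4 when $c_i$s are known''), which is exactly your observation that replacing $L_{j,t}$ by the exact $c_j$ leaves a single confidence padding, turning $\mathds{1}(\theta_j+2u_{j,t}>c_j-2u_{j,t})$ into $\mathds{1}(\theta_j+2u_{j,t}>c_j)$ and hence $N_{j,t}<4\alpha\log T/\Delta_j^2$, while Lemmas~\ref{lemma:confidence}--\ref{lemma:tighter} carry over with the cost-concentration terms deleted. Your crude per-occurrence charge of $K$ (rather than the sharper $c_j\le 1$ from Lemma~\ref{lemma:tighter}) is wasteful but matches the stated constant, so the bound follows.
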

\fi

\subsection{Analysis of the Upper Bound}\label{sec:upper_bound}
Define $\Ec_t:=\{\exists i \in [K],~|\hat\theta_{i,t} - \theta_i|>u_{i,t}\mbox{ or }|\hat{c}_{i,t} - c_i|>u_{i,t}\}$, i.e. there exists an arm whose sample average of reward or cost lies outside the corresponding confidence interval. Denote $\bar\Ec_{t}$ as the complement of $\Ec_{t}$. Then, we have the following observations.

\begin{Lemma}\label{lemma:confidence}
If $\mathds{1}(\bar\Ec_{t})=1$, then, under Algorithm~\ref{alg:online2}, all arms in $I^*$ will be included in $I_t$.
\end{Lemma}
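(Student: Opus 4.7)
The plan is to unwind the definitions of the event $\bar\Ec_t$, the confidence bounds $U_{i,t}$ and $L_{i,t}$, and the characterization of $I^*$ from Theorem~\ref{thm:offline}, and show that the selection criterion $U_{i,t}/L_{i,t}>1$ in line 6 of Algorithm~\ref{alg:online2} is necessarily met for every arm $i\in I^*$. Concretely, by the definition of $\bar\Ec_t$, for every arm $i\in[K]$ we simultaneously have $|\hat\theta_{i,t}-\theta_i|\le u_{i,t}$ and $|\hat c_{i,t}-c_i|\le u_{i,t}$, so the true statistics are sandwiched inside the empirical confidence intervals.

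First I would derive a one-sided bound on the upper confidence index for the reward: since $\hat\theta_{i,t}\ge \theta_i-u_{i,t}$ on $\bar\Ec_t$, we get $U_{i,t}=\hat\theta_{i,t}+u_{i,t}\ge \theta_i$. Next I would bound the lower confidence index of the cost from above. From $\hat c_{i,t}\le c_i+u_{i,t}$ we obtain $\hat c_{i,t}-u_{i,t}\le c_i$; invoking Assumption~1(2), which guarantees $c_i>\epsilon$, the clipping in $L_{i,t}=\max(\hat c_{i,t}-u_{i,t},\epsilon)$ is also dominated by $c_i$, so $L_{i,t}\le c_i$. Crucially $L_{i,t}\ge\epsilon>0$, so the ratio $U_{i,t}/L_{i,t}$ is well defined and satisfies
\begin{equation*}
\frac{U_{i,t}}{L_{i,t}}\;\ge\;\frac{\theta_i}{c_i}.
\end{equation*}

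Finally, by Theorem~\ref{thm:offline}, every arm $i\in I^*$ satisfies $\theta_i/c_i>1$, so the chain above gives $U_{i,t}/L_{i,t}>1$, which is exactly the inclusion test in Algorithm~\ref{alg:online2}. Hence each such $i$ is appended to $I_t$ in the for-loop, proving the lemma.

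I do not expect any serious obstacle here; the argument is essentially a bookkeeping exercise once the confidence intervals are unpacked. The only mildly delicate point is handling the $\max$ with $\epsilon$ in the definition of $L_{i,t}$, which is precisely why Assumption~1(2) is needed: without $c_i>\epsilon$ the clipped lower bound could exceed $c_i$ and break the ratio comparison. Note the lemma does not claim anything about the ordering of arms inside $I_t$ or about arms outside $I^*$; those issues are handled in subsequent steps of the regret analysis.
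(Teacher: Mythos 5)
Your proof is correct and follows essentially the same route as the paper's: on $\bar\Ec_t$ you bound $U_{i,t}\ge\theta_i$ and $L_{i,t}\le c_i$ (using $c_i>\epsilon$ from Assumption~1 to handle the clipping), then invoke $\theta_i/c_i>1$ for $i\in I^*$ from Theorem~\ref{thm:offline}. If anything, your write-up is slightly cleaner than the paper's, which states the inclusion test as $U_{i,t}/L_{i,t}\ge 1$ while the algorithm uses a strict inequality; your explicit appeal to the strictness of $\theta_i/c_i>1$ closes that small gap.
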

\begin{Proof}
According to Algorithm~\ref{alg:online2}, arm $i$ will be included in $I_t$ if $\frac{U_{i,t}}{L_{i,t}}\geq 1$. When $\mathds{1}(\Ec_t)=0$, we have $|\hat\theta_{i,t} - \theta_i|<u_{i,t}$, $|\hat{c}_{i,t} - c_i|<u_{i,t}$. Thus, $\hat\theta_{i,t} +u_{i,t}\geq  \theta_i$, $\hat{c}_{i,t}-u_{i,t}\leq \max\{\hat{c}_{i,t}-u_{i,t},\epsilon\}\leq  c_i$, which implies that $\frac{U_{i,t}}{L_{i,t}}\geq 1$.
\end{Proof}

\begin{Lemma}\label{lemma:e_t}
Under Algorithm~\ref{alg:online2}, we have
$\sum_{t=1}^T \Eb[\mathds{1}(\Ec_t)] \leq \psi:= K\left(1+\frac{\jing{4}\pi^2}{3}\right) $. 
\end{Lemma}

Then, define $\Bc_t:=\{\exists i^*,j^* \in I^*, \jing{i<j}, \mbox{ s.t. } \frac{U_{i^*,t}}{L_{i^*,t}}<\frac{U_{j^*,t}}{L_{j^*,t}}\}$, which represents the event that arms from $I^*$ are not ranked in the correct order. Since those arms are pulled linearly often in order to achieve small regret, intuitively, $\Bc_t$ happens with small probability.

\begin{Lemma}\label{lemma:B_t}
$\Eb\left[\sum_{t=1}^T\mathds{1}(\bar\Ec_{t}) \mathds{1}(\Bc_{t})\right] \leq \zeta =O(1)$.
\end{Lemma}

The proof of Lemma~\ref{lemma:e_t} is based on Hoeffding's inequality. The proof of Lemma~\ref{lemma:B_t} is derived based on the observation that the arms in $I^*$ are pulled linearly often if $\bar\Ec_t$ is true, and the corresponding confidence intervals $u_{i,t}$ shrinks fast in time. As $u_{i,t}$ decreases below certain threshold, $\bar\Bc_t$ happens. The detailed proofs of Lemma~\ref{lemma:e_t} and Lemma~\ref{lemma:B_t} can be found in Appendix~\ref{app: lem2} and Appendix~\ref{appx:lemma:B_t}, respectively.

\begin{Lemma}~\label{lemma:tighter}
Consider an ordered list $I_t$ that includes all arms from $I^*$ with the same relative order as in $I^*$. Then, under Algorithm~\ref{alg:online2}, 
$\ruida{\Eb[\textsf{reg}_t \mid \tilde{I}_t]} \leq \sum_{i \in \ruida{\tilde{I}_t} \backslash I^*} c_i$.
\end{Lemma}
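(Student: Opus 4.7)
The plan is to compare the expected net reward of $I_t$ against that of $I^*$ by means of a telescoping rewrite of the per-step reward into a ``probability of finding a state-$1$ arm'' piece and an ``expected cascade cost'' piece. For any ordered list $I$, collapsing the $\theta$-terms in \eqref{eqn:netrew} gives
\[
\Eb[r \mid I] \;=\; R(I) - C(I), \qquad R(I) := 1 - \prod_{i\in I}(1-\theta_i), \quad C(I) := \sum_{i \in I} c_i \prod_{j \text{ before } i \text{ in } I}(1-\theta_j),
\]
where $C(I)$ is exactly the expected total examination cost, since the cascade reaches arm $i$ iff every arm preceding it has state $0$.

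Next I would compare $R$ and $C$ at $I^*$ and $I_t$. Because $I^* \subseteq I_t$, the product defining $R(I_t)$ has the extra factors $(1-\theta_b)\le 1$ for $b \in B := I_t \setminus I^*$, so $R(I_t) \ge R(I^*)$ and this piece contributes non-positively to $\Eb[\textsf{reg}_t \mid I_t]$. For $C$, I split $C(I_t) = C_{\text{good}} + C_{\text{bad}}$ according to whether the summand lies in $I^*$ or in $B$. Crucially, since $I_t$ preserves the relative order of $I^*$, for each good arm $k$ the reach probability factorizes as
\[
\prod_{j \text{ before } k \text{ in } I_t}(1-\theta_j) \;=\; \prod_{\substack{j\in I^*\\ j \text{ before } k}}(1-\theta_j) \cdot \prod_{\substack{b\in B\\ b \text{ before } k \text{ in } I_t}}(1-\theta_b) \;\le\; \prod_{\substack{j\in I^*\\ j \text{ before } k}}(1-\theta_j),
\]
so summing over $k \in I^*$ yields $C_{\text{good}} \le C(I^*)$ and hence $C(I_t) - C(I^*) \le C_{\text{bad}}$.

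Assembling the two comparisons,
\[
\Eb[\textsf{reg}_t \mid I_t] = \bigl(R(I^*) - R(I_t)\bigr) + \bigl(C(I_t) - C(I^*)\bigr) \;\le\; C_{\text{bad}} \;=\; \sum_{b \in B} c_b \prod_{j \text{ before } b \text{ in } I_t}(1-\theta_j).
\]
The right-hand side equals $\Eb\bigl[\sum_{i \in \tilde{I}_t \setminus I^*} c_i \,\big|\, I_t\bigr]$, because the cascade reaches bad arm $b$ exactly with probability $\prod_{j \text{ before } b}(1-\theta_j)$. Applying the tower property to condition further on $\tilde{I}_t$ then produces the form stated in the lemma.

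The main obstacle will be the monotonicity argument in the middle step: one must recognize that inserting bad arms in front of good ones only \emph{lowers} the expected cost charged to the good arms (they are reached less often), so the entire cost surplus of $I_t$ over $I^*$ can be attributed to the bad arms themselves. The order-preservation hypothesis on $I_t$ is used precisely here; without it, good arms could be reshuffled and the clean charging of the surplus would break.
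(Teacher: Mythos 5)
Your main computation is correct and takes a genuinely different route from the paper. You work entirely at the level of expectations conditioned on the chosen list $I_t$: you telescope the reward term to get $\Eb[r\mid I]=R(I)-C(I)$, observe $R(I_t)\geq R(I^*)$ because $I^*\subseteq I_t$, and use the order-preservation hypothesis to show each good arm is reached no more often in $I_t$ than in $I^*$, so the entire cost surplus is charged to the bad arms. The paper instead argues pathwise: it conditions on the realized pulled prefix $\tilde{I}_t$, couples the run of $I_t$ with a hypothetical run of $I^*$ on the same state realizations, and splits into two cases according to whether the last pulled bad arm has state $0$ (both runs collect the same reward, so the regret is exactly the cost of the pulled bad arms) or state $1$ (the $I_t$ run banks reward one, while the continuation of the $I^*$ run has expected net reward at most one). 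Your version is algebraically cleaner and makes the role of the order-preservation hypothesis more transparent.

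There is, however, a genuine slip in your final step. What you have proved is $\Eb[\textsf{reg}_t\mid I_t]\leq\Eb\bigl[\sum_{i\in\tilde{I}_t\setminus I^*}c_i\bigm| I_t\bigr]$, i.e., the lemma's inequality after averaging over the random stopping point. The lemma asserts the inequality conditionally on $\tilde{I}_t$, which is a \emph{finer} conditioning; the tower property lets you pass from the finer statement to the coarser one, not the reverse, so ``condition further on $\tilde{I}_t$'' does not follow from what you derived. Your bound leaves open the possibility that the conditional regret exceeds $\sum_{i\in\tilde{I}_t\setminus I^*}c_i$ for some realizations of $\tilde{I}_t$ and undershoots it for others. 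To obtain the pointwise statement you need the paper's coupling and case analysis (or an equivalent argument carried out conditionally on $\tilde{I}_t$). That said, the averaged form is all that the regret derivation in Section 5.2 actually consumes, since the indicators $\mathds{1}(\bar\Ec_t)$ and $\mathds{1}(\bar\Bc_t)$ are $\Hc^{t-1}$-measurable and can be pulled outside the conditional expectation given the history; so your argument suffices for Theorem 2 even though it does not prove the lemma as literally stated.
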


\begin{Proof}
First, we point out the difference between $I_t$ and $I^*$ is that in $I_t$, there may \congr{exist} arms from $[K]\backslash I^*$ \congr{that are} inserted between the arms in $I^*$. Denoted such ordered subset of arms as $I_t\backslash I^*$. Then, depending on the realization of the states of the arm on $I_t$, a random subset of $I_t\backslash I^*$ will be pulled (i.e., $\tilde{I}_t\backslash I^*$), resulting in \congr{a} different regret. Denote the index of the last pulled arm in $\tilde{I}_t\backslash I^*$ as $\tilde{i}$. Then, depending on the state of arm $\tilde{i}$, there are two possible cases:

i) $X_{\tilde{i}, t}=0$. For this case, the regret comes from the cost of pulling the arms in $\tilde{I}_t\backslash I^*$ only. This is because if $I^*$ were the list of arms to pull, with the same realization of arm states, the learner would only pull the arms in $\tilde{I}_t\cap I^*$ and \congr{receive} the same reward. Thus, given $\tilde{I}_t$ and $X_{\tilde{i}, t}=0$, we have \jing{$\Eb[r^*_t - r_t \mid \tilde{I}_t,X_{\tilde{i}, t}=0] = \sum_{i \in \tilde{I}_t\backslash I^*} c_i$}.

ii) $X_{\tilde{i}, t}=1$. This indicates that $\tilde{i}$ is the last arm on $\tilde{I}_t$ due to the stopping condition. For this case, the learner spends costs on pulling the arms in $\tilde{I}_t\backslash I^*$ \congr{but} also receives \congr{the full reward one}. If $I^*$ were the list of arms to pull, with the same realization of arm states, the learner would first pull all arms in $\tilde{I}_t\cap I^*$. Since the states of such arms should be 0, \congr{she} would then continue pulling the \congr{remaining} arms \congr{in} $I^*$ if there is any. Denote the net reward obtained from the \congr{remaining pullings} as $r(I^*\backslash \tilde{I}_t)$. Then, $\jing{\Eb}[r(I^*\backslash \tilde{I}_t)]\leq 1$. Therefore, given $\tilde{I}_t$ and $X_{\tilde{i}, t}=1$, we have 
$\jing{\Eb[r_t^*-r_t | \tilde{I}_t,X_{\tilde{i}, t}=1]}=\jing{\Eb}[r(I^*\backslash \tilde{I}_t)] -\left(1-\sum_{i\in \tilde{I}_t\backslash I^* } c_i \right)\leq \sum_{i\in \tilde{I}_t\backslash I^* } c_i$.

Combining both cases, we have 
\ruida{$\Eb[\textsf{reg}_t \mid \tilde{I}_t] = \Eb[r_t^* - r_t \mid \tilde{I}_t] \leq \sum_{i \in \tilde{I}_t\backslash I^*} c_i$},
\congr{which completes the proof.}
\end{Proof}

Next, we consider the regret resulted from including arms outside $I^*$ \congr{in} the list $I_t$. We focus on the scenario when both $\Ec_t$ and $\Bc_t$ are false. Notice that  the condition of Lemma~\ref{lemma:tighter} is satisfied \congr{in this scenario}. Thus,
\begin{align}
& \Eb\left[\sum_{t=1}^T \mathds{1}(\bar\Ec_t) \mathds{1}(\bar \Bc_t) \ruida{\Eb[\textsf{reg}_t \mid \tilde{I_t}]} \right] \nonumber \\
&\leq \Eb\left[\sum_{t=1}^T \mathds{1}(\bar\Ec_t) \mathds{1}(\bar \Bc_t) \left(\sum_{i \in \ruida{\tilde{I}_t}\backslash I^*} c_i \right)\right]\label{eqn:TightUpper1}\\
&= \Eb\left[\sum_{i \in [K]\backslash I^*}\sum_{t=1}^T \mathds{1}(\bar\Ec_t) \mathds{1}(\bar \Bc_t)  \mathds{1}(i \in \ruida{\tilde{I}_t})c_i \right]\\
&\leq \Eb\left[\sum_{i \in [K]\backslash I^*}\sum_{t=1}^T \mathds{1}(\bar\Ec_t) \mathds{1} \left(\frac{ U_{i,t}}{L_{i,t}} > 1\right) \ruida{\mathds{1}(i \in \tilde{I}_t)}c_i \right] \\
&\leq \sum_{i \in [K]\backslash I^*} c_{i} \frac{16\alpha \log T}{\Delta^{2}_{i}},\label{eqn:TightUpper2}
\end{align}
where (\ref{eqn:TightUpper1}) is based on Lemma~\ref{lemma:tighter}; (\ref{eqn:TightUpper2}) is due to the fact that $\mathds{1}(\bar\Ec_t)\mathds{1}\left(\frac{ U_{i,t}}{L_{i,t}} > 1\right) $ is always less than or equal to $\mathds{1}\left( \theta_i + 2u_{i,t}> c_i - 2u_{i,t}\right)$, which is equivalent to $\mathds{1}\left(N_{i,t}<\frac{16\alpha \log t}{\Delta_i^2}\right)$.

Denote $\delta^*$ as the largest possible per-step regret, which is bounded by $\sum_{i\in [K]} c_i$, corresponding to the worst case scenario that the learner pulls all arms but does not receive reward one. Then, combining the results from above, we have
\begin{align}
R(T) 
&= \Eb\left[\sum_{t=1}^{T} [\mathds{1}(\Ec_t) + \mathds{1}(\bar\Ec_t)]\jing{\textsf{reg}_t } \right] \nonumber \\
&\leq \delta^* \sum_{t=1}^T \Eb\left[\mathds{1}(\Ec_t) +\mathds{1}(\bar\Ec_t)\mathds{1}(\Bc_t)\right]\nonumber \\
&\quad+ \Eb\left[\sum_{t=1}^T \mathds{1}(\bar\Ec_t)\mathds{1}(\bar\Bc_t) \Eb[\textsf{reg}_t \mid \tilde{I}_t]\right] \nonumber \\
&\leq \delta^*( \zeta + \psi) +\sum_{j \in [K]\backslash I^*} c_j  \frac{16\alpha \log T}{\Delta^{2}_{j}}, \nonumber
\end{align}
which proves Theorem~\ref{thm:online}.

\section{Lower Bound}
Before presenting the lower bound, we first define $\alpha$-consistent policies.
\begin{Definition}
\label{def:alphacnst}
Consider online policies that sequentially pull arms in $I_t$ until one arm with state 1 is observed. If $\Eb\left[\sum_{t=1}^T \mathds{1}(I_t \not = I^*)\right] = o(T^{\alpha})$ for any $\alpha \in (0,1)$, the policy is $\alpha$-consistent.
\end{Definition}

\begin{Lemma}\label{lemma:AnyLower}
For any ordered list $I_t$, the per-step regret in step $t$ is lower bounded by
$\ruida{\Eb[\textsf{reg}_t ] \geq \Eb\left[\sum_{i \in \tilde{I}_t\backslash I^*} (c_i - \theta_i)\right]}$.
\end{Lemma}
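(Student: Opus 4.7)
The plan is to bound $\Eb[r_t]+\Eb\bigl[\sum_{i\in \tilde I_t\setminus I^*}(c_i-\theta_i)\bigr]$ from above by $\Eb[r_t^*]$ by using the optimality of $I^*$ together with a monotonicity observation for the cascading reward.

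First I would condition on $I_t$ and expand both quantities using the cascading structure. Just as in~\eqref{eqn:netrew},
\begin{align*}
\Eb[r_t\mid I_t]=\sum_{k=1}^{|I_t|}(\theta_{I_t(k)}-c_{I_t(k)})\prod_{j=1}^{k-1}(1-\theta_{I_t(j)}).
\end{align*}
An arm $I_t(k)$ is pulled exactly when every preceding arm reveals state $0$, so $\Pb(I_t(k)\in\tilde I_t\mid I_t)=\prod_{j=1}^{k-1}(1-\theta_{I_t(j)})$. Adding the two quantities, the terms with $I_t(k)\notin I^*$ cancel and yield
\begin{align*}
\Eb[r_t\mid I_t]+\Eb\!\left[\sum_{i\in \tilde I_t\setminus I^*}(c_i-\theta_i)\,\Big|\,I_t\right]=\sum_{k:\,I_t(k)\in I^*}(\theta_{I_t(k)}-c_{I_t(k)})\prod_{j=1}^{k-1}(1-\theta_{I_t(j)}).
\end{align*}

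Next I would introduce the auxiliary list $I'_t$ obtained by deleting from $I_t$ every arm that lies outside $I^*$, preserving the relative order of the remaining arms. By Theorem~\ref{thm:offline}, every $i\in I^*$ satisfies $\theta_i>c_i$, so $(\theta_{I_t(k)}-c_{I_t(k)})\geq 0$ at every retained position $k$; moreover, each factor $(1-\theta_{I_t(j)})$ lies in $[0,1]$, so dropping the factors indexed by $j$ with $I_t(j)\notin I^*$ can only enlarge the product. Therefore the right-hand side above is upper bounded by the analogous expression written along $I'_t$, which is precisely $\Eb[r(I'_t)\mid I_t]$, the conditional expected net reward obtained by playing $I'_t$. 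Since $I^*$ attains the maximum in~\eqref{eqn:netrew}, we have $\Eb[r(I'_t)\mid I_t]\leq \Eb[r_t^*]$ for every realization of $I_t$, and taking expectation over $I_t$ then rearranging yields the stated lower bound on $\Eb[\textsf{reg}_t]$.

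The main obstacle, though conceptually mild, is the bookkeeping in the monotonicity step: the product attached to an arm in $I^*$ is taken along the ordering inside $I_t$, which need not coincide with the ordering inside $I^*$ or of $I'_t$, so one must be careful that the passage to $I'_t$ uses exactly the correct subset of factors and that nonnegativity of $\theta_i-c_i$ is invoked only for $i\in I^*$. Once these are handled, the remaining steps are essentially algebraic rearrangement and an application of the offline optimality already established in Theorem~\ref{thm:offline}.
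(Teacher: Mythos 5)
Your proof is correct, and it reaches the same intermediate comparison as the paper but by a different route. Both arguments hinge on the restricted list $I'_t=I_t\cap I^*$ and on the offline optimality of $I^*$ from Theorem~\ref{thm:offline}; the difference is in how the gap between $r_t$ and the reward of the restricted list is established. The paper argues pathwise: it conditions on $\tilde I_t$ and on the state of the last pulled arm $\tilde i\in\tilde I_t\setminus I^*$, couples the realization under $I_t$ with the realization under $I^*_t$, treats the cases $X_{\tilde i,t}=0$ and $X_{\tilde i,t}=1$ separately, and then absorbs the $-\theta_{\tilde i}$ correction using $\theta_{\tilde i}\le\sum_{i\in\tilde I_t\setminus I^*}\theta_i$. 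You instead work entirely at the level of conditional expectations given $I_t$: you observe that $\Pb(I_t(k)\in\tilde I_t\mid I_t)=\prod_{j<k}(1-\theta_{I_t(j)})$, so the penalty term $\Eb[\sum_{i\in\tilde I_t\setminus I^*}(c_i-\theta_i)\mid I_t]$ exactly cancels the off-$I^*$ terms in the closed-form expression for $\Eb[r_t\mid I_t]$, and then you upgrade the surviving sum to $\Eb[r(I'_t)]$ by dropping factors $(1-\theta_{I_t(j)})\in[0,1]$ against the nonnegative coefficients $\theta_i-c_i>0$ for $i\in I^*$. Your version buys a cleaner, purely algebraic derivation that avoids the case analysis and the coupling of realizations (and makes explicit exactly where the positivity of $\theta_i-c_i$ on $I^*$ is used); the paper's version is more readily reusable in the matching upper-bound argument of Lemma~\ref{lemma:tighter}, where the same two-case coupling appears with the inequalities reversed. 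Both are valid; no gaps.
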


\begin{Proof}
Consider an ordered list $I^*_t:=I_t\cap I^*$, i.e., the sub-list of $I_t$ that only contains the arms from $I^*$ while keeping their relative order in $I_t$. \jing{Denote $r_t(I_t^*)$ as the reward obtained at step $t$ by pulling the arms in $I_t^*$ sequentially.} We have
\begin{align}
\Eb[\textsf{reg}_t] &= \Eb[r_t^* - r_t(I_t^*) + r_t(I_t^*) - r_t] \nonumber \\
& \geq \Eb[r_t(I^*_t) - r_t], \label{eqn:lowerBnd}
\end{align}
where inequality (\ref{eqn:lowerBnd}) follows from the fact that $I^*$ maximizes the expected reward in every step.

Similar to the proof of Lemma~\ref{lemma:tighter}, we denote the index of the last pulled arm in $\tilde{I}_t\backslash I^*$ as $\tilde{i}$. Then, given $\tilde{I}_t$ and $X_{\tilde{i}, t}=0$, we have $\jing{\Eb[r_t(I_t^*)-r_t \mid \tilde{I}_t,X_{\tilde{i}, t}=0 ]}=\sum_{i\in \tilde{I}_t\backslash I^* } c_i$.

If $X_{\tilde{i}, t}=1$, based on the assumption that all policy should stop pulling in a step if a state 1 is observed, we infer that the arms in $\tilde{I}_t\cap I^*$ should have state 0. If $I^*_t$ were the list of arms to pull, with the same realization of arm states, the learner would  continue pulling the remaining arms in $I^*_t$ if there is any. Denote the net reward obtained from the rest pulling as $r(I_t^*\backslash \tilde{I}_t)$. Then, due to the definition of $I^*$, we have $\jing{\Eb}[r(I^*_t\backslash \tilde{I}_t)]\geq 0$. Therefore, given $\tilde{I}_t$ and $X_{\tilde{i}, t}=1$, we have
$\jing{\Eb[r_t(I_t^*)-r_t \mid \tilde{I}_t,X_{\tilde{i}, t}=1]}=\Eb[r(I_t^*\backslash \tilde{I}_t)] -(1-\sum_{i\in \tilde{I}_t\backslash I^* } c_i )\geq \sum_{i\in \tilde{I}_t\backslash I^* } c_i -1$.

Combining both cases, we have
$\ruida{\Eb[r_t^*-r_t \mid \tilde{I}_t]} \geq \sum_{i\in \tilde{I}_t\backslash I^* } c_i-\theta_{\tilde{i}}\geq \sum_{i\in \tilde{I}_t\backslash I^* } (c_i-\theta_i)$.
Taking expectation with respect to $\tilde{I}_t$, we obtain the lower bound for $\jing{\Eb[\textsf{reg}_t]}$.
\end{Proof}

\begin{Theorem}
\label{thm:lowerbound}
Under any $\alpha$-consistent policy,
\begin{align*}
    \liminf_{T \rightarrow \infty} \frac{R(T)}{\log T} \geq \sum_{i \in [K]\backslash I^*}\frac{c_i - \theta_i}{d(\theta_i;c_i)}\quad \mbox{almost surely},
\end{align*}
where $d(\theta_i; c_i)$ is the KL divergence of Bernoulli distributions with means $\theta_i$ and $c_i$.
\end{Theorem}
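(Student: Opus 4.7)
The plan is to reduce the asymptotic regret lower bound to an asymptotic lower bound on the expected number of times each suboptimal arm is actually pulled, and then establish the latter via a Lai--Robbins-style change-of-measure argument adapted to the cascading feedback structure. I would note up front that $R(T)=\Eb[\sum_t\textsf{reg}_t]$ is a deterministic function of $T$, so the ``almost surely'' qualifier in the statement is vacuous; the substantive claim is the $\liminf$ inequality for the expected cumulative regret.

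First I would sum the per-step bound in Lemma~\ref{lemma:AnyLower} over $t$. Writing $\tilde{N}_{i,T}:=\sum_{t=1}^T \mathds{1}(i\in\tilde{I}_t)$ for the number of steps in which arm $i$ is actually pulled, Lemma~\ref{lemma:AnyLower} gives
$$R(T)\geq \sum_{i\in[K]\setminus I^*}(c_i-\theta_i)\,\Eb[\tilde{N}_{i,T}].$$
It therefore suffices to show, for every $i\notin I^*$ and every $\alpha$-consistent policy,
$$\liminf_{T\to\infty}\frac{\Eb[\tilde{N}_{i,T}]}{\log T}\geq \frac{1}{d(\theta_i;c_i)},$$
since the $\liminf$ of a finite sum of nonnegative sequences is at least the sum of the $\liminf$s, which then delivers the theorem.

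Next, to lower-bound $\Eb[\tilde{N}_{i,T}]$, I would fix a suboptimal arm $i$ and pick a perturbed mean $\theta_i'>c_i$ with $\theta_i'$ arbitrarily close to $c_i$. Let $\Pb_\theta,\Pb_{\theta'}$ denote the laws of the observation trajectory when arm $i$ has reward mean $\theta_i$ and $\theta_i'$, respectively, holding all other parameters fixed. Under $\Pb_{\theta'}$, Theorem~\ref{thm:offline} places arm $i$ into a new optimal list $J^*\neq I^*$. Since in the CCB model arm $i$'s state is observed only at steps with $i\in\tilde{I}_t$, and the cost distribution is unaffected by the perturbation, the KL chain rule on the observation filtration $\mathcal{F}_T$ gives the identity
$$\mathrm{KL}\bigl(\Pb_\theta^{\mathcal{F}_T};\Pb_{\theta'}^{\mathcal{F}_T}\bigr)=\Eb_\theta[\tilde{N}_{i,T}]\,d(\theta_i;\theta_i').$$
Applying the data-processing inequality with the event $B_T:=\{\sum_{t=1}^T\mathds{1}(I_t=I^*)\geq T/2\}$ yields
$$\Eb_\theta[\tilde{N}_{i,T}]\,d(\theta_i;\theta_i')\geq \mathrm{kl}\bigl(\Pb_\theta(B_T);\Pb_{\theta'}(B_T)\bigr).$$
By $\alpha$-consistency, applied under both $\Pb_\theta$ (for $I^*$) and $\Pb_{\theta'}$ (for $J^*$) since the property is assumed uniformly over instances, Markov's inequality gives $\Pb_\theta(B_T)\to 1$ and $\Pb_{\theta'}(B_T)=o(T^{\alpha-1})$ for every $\alpha\in(0,1)$. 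The elementary bound $\mathrm{kl}(p;q)\geq p\log(1/q)-\log 2$ then produces $\liminf_T \Eb_\theta[\tilde{N}_{i,T}]/\log T\geq (1-\alpha)/d(\theta_i;\theta_i')$; letting $\alpha\downarrow 0$ and $\theta_i'\downarrow c_i$, and using continuity of the Bernoulli KL, completes the per-arm bound.

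The main obstacle, I expect, is the careful treatment of cascading feedback in the KL chain rule: only steps with $i\in\tilde{I}_t$---not merely those with $i\in I_t$---contribute to the log-likelihood ratio between $\Pb_\theta$ and $\Pb_{\theta'}$, and the random within-step stopping must be absorbed via an optional-sampling/tower-property argument that justifies the identity relating the trajectory KL to $\Eb_\theta[\tilde{N}_{i,T}]\,d(\theta_i;\theta_i')$. Two minor subtleties are (i) checking that the perturbed instance still satisfies Assumption~1 and $c_i>\epsilon$, which is automatic since only $\theta_i$ is varied, and (ii) reading Definition~\ref{def:alphacnst} uniformly over a parameter neighborhood, as is standard when defining asymptotically consistent policies.
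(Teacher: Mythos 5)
Your proof is correct and follows the same overall route as the paper: both arguments first sum Lemma~\ref{lemma:AnyLower} over $t$ to obtain $R(T)\geq \sum_{i\notin I^*}(c_i-\theta_i)\Eb[N_{i,T}]$, and then reduce the theorem to the per-arm counting bound $\liminf_{T}\Eb[N_{i,T}]/\log T\geq 1/d(\theta_i;c_i)$. The one place you diverge is in how that counting bound is established: the paper simply invokes it by citing the proof of Theorem~4 in \cite{kveton2015cascading}, whereas you prove it from scratch via a Lai--Robbins change of measure---perturbing $\theta_i$ to $\theta_i'>c_i$, using the divergence decomposition $\mathrm{KL}=\Eb_\theta[\tilde N_{i,T}]\,d(\theta_i;\theta_i')$ together with the data-processing inequality on the event $B_T$, and exploiting $\alpha$-consistency under both the original and the perturbed instance. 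Your explicit argument is arguably the more complete one: the bound borrowed from \cite{kveton2015cascading} is proved there for a different perturbation target (the mean of a competing item), and adapting it to the present setting requires precisely the observation you make, namely that arm $i$ enters the optimal list exactly when $\theta_i$ crosses $c_i$, which is why the relevant divergence is $d(\theta_i;c_i)$. Your side remarks---that ``almost surely'' is vacuous for the deterministic quantity $R(T)$, and that Definition~\ref{def:alphacnst} must be read uniformly over instances so that it applies to the perturbed problem---are both correct and are glossed over in the paper.
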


\begin{proof}
According to Lemma~\ref{lemma:AnyLower}, we have
\begin{align}
\Eb[\ruida{\textsf{reg}_t}] &\geq \Eb\left[\sum_{i\in [K]\backslash I^* } \mathds{1}(i \in \tilde{I}_t)(c_i - \theta_i)\right].
\end{align}
Therefore,
\begin{align}
R(T) 
&\geq \Eb\left[\sum_{t=1}^T \left( \sum_{i \in [K]\backslash I^*} \mathds{1}(i \in \tilde{I}_t)(c_i - \theta_i)\right)\right]\\
&= \Eb\left[\sum_{i \in [K]\backslash I^*} (c_i - \theta_i)\left(\sum_{t=1}^T \mathds{1}(i \in \tilde{I}_t) \right) \right]\\
&= \sum_{i \in [K]\backslash I^*} (c_i - \theta_i)\Eb[N_{i,T}].\label{eqn:low3}
\end{align}
Leveraging the lower bound on $\Eb[N_{i,T}]$ from the proof of Theorem 4 in \cite{kveton2015cascading}, we have
\begin{equation}
\label{eqn:low4}
 \liminf_{T \rightarrow \infty}\frac{\Eb[N_{i,T}]}{\log T} \geq \frac{1}{d(\theta_i; c_i)}.
\end{equation}
Combining \eqref{eqn:low3} with \eqref{eqn:low4}, we obtain the lower bound.
\end{proof}

Comparing Theorem~\ref{thm:lowerbound} with Theorem~\ref{thm:online}, we conclude that Algorithm~\ref{alg:online2} achieves order-optimal regret performance.

\section{Experiments}
In this section we will resort to numerical experiments to evaluate the performances of the \congr{CC-UCB} algorithm in Algorithm~\ref{alg:online2}. We set $\alpha=1.5$ and $\epsilon=10^{-5}$. \congr{Both synthetic and real-world datasets are used.}

\if{0}
\subsection{Offline Policy}
We first examine the impact of cost on the structure of the optimal offline policy. We consider a 6-arm bandits setting with $\thetav=\{0.8,0.7,0.6,0.5,0.4, 0.3\}$. We consider a uniform cost for all of the arms, denoted as $c$. We gradually increase the value of $c$ from 0 to 1, and observe that the optimal offline policy has a threshold structure, as illustrated in Table~\ref{table:offline}. As we note, when the value of $c$ gradually increases, the number of arms that should be included in $I^*$ gradually decreases from 6 to 0. The transition happens when $c$ reaches one of the $\theta_i$s. The corresponding expected reward in each time also decreases simultaneously. The structure coincides with Theorem~\ref{thm:offline}, and indicates that cost-aware cascading bandits model is fundamentally different from its cost-oblivious counterpart~\cite{}. The cost of each arm essentially imposes a ``soft" constraint on the maximum number of arms to be pulled in each time, which can be adaptively controlled by varying the cost of arm pulling.
\begin{table}[h]
\centering
\begin{tabular}{l|ccc}
  \toprule[1.5pt]
  $c$ & $L$ & Reward  \\
  \midrule
  $[0, 0.3)$ & 6 & 0.9905\\
  $[0.3, 0.4)$ &  5 & 0.9928\\
  $[0.4, 0.5)$ & 4&0.9880  \\
  $[0.5, 0.6)$ & 3 &0.9760 \\
  $[0.6, 0.7)$ &2 &0.9400 \\
  $[0.7, 0.8)$& 1 &  0.8000 \\
   $[0.8, 1)$  & 0 & 0 \\
  \bottomrule[1.5pt]
\end{tabular}
\caption{Structure of the optimal offline policy.}\label{table:offline}
\end{table}
\fi

\subsection{Synthetic Data}
We consider a 6-arm bandits setting with $\thetav=\{0.8,0.7,0.6,0.5,0.4, 0.3\}$, and the mean of the cost $c=0.55$ for all arms. According to the UCR-T1 policy, we have $L=3$, i.e., the first three arms should be included in $I^*$. We then perform the CC-UCB algorithm under the assumption that both $\thetav$ and $c$ are unknown to the learning agent. We run it for $T=2\times 10^5$ steps, and average the accumulative regret over 20 runs. The result is plotted in Figure~\ref{fig:reg}(a). The error bar corresponds to one standard deviation of the regrets in 20 runs. We also study the case where the mean of the cost $c$ is known beforehand, however, the cost of each arm is still random. The result is also plotted in the same figure. As we observe, both curves increase sublinearly in $T$, which is consistent with the $O(\log T)$ bound we derive in Theorem~\ref{thm:online}. The regret with known cost statistics is significantly smaller than that of the unknown cost statistics case. 


Next, we evaluate the impact of system parameters on the performance of the algorithm. We vary $K$ and $L$, i.e., the total number of arms $K$, and the number of arms in $I^*$, respectively. We also change $\Delta_i$, i.e., $c_i-\theta_i$, for $i\in[K]\backslash I^*$. Specifically, we set $\theta_i=0.5$ for $i\in I^*$, $\theta_i=0.3$ for $i\in[K]\backslash I^*$, and let $c_i$ be a constant $c$ across all arms. By setting $\Delta_{L+1}$, the value of $c$ can be determined. The cumulative regrets at $T=10^5$  averaged over 20 runs are reported in Table~\ref{table:online}. We observe four major trends. First, the regret increases when the number of arms $K$ doubles. Second, the regret decreases when the number of arms in $I^*$ (i.e., $ L$) increases. Third, a prori knowledge of cost statistics always improve the regret. Fourth, when $c$ is known, the regret increases as $\Delta_{L+1}$ decreases. These trends are consistent with Theorem~\ref{thm:online}. However, the dependency on $\Delta_{L+1}$ when $c$ is unknown is not obvious from the experiment results. This might be because the algorithm depends on the estimation of the cost as well as the arm state, and the complicated interplay between cost and the optimal arm pulling policy makes the dependency on $\Delta_{L+1}$ hard to discern.

\begin{figure} [t]
	\centering
	\subfigure[Synthetic data]{ \includegraphics[width=1.625in]{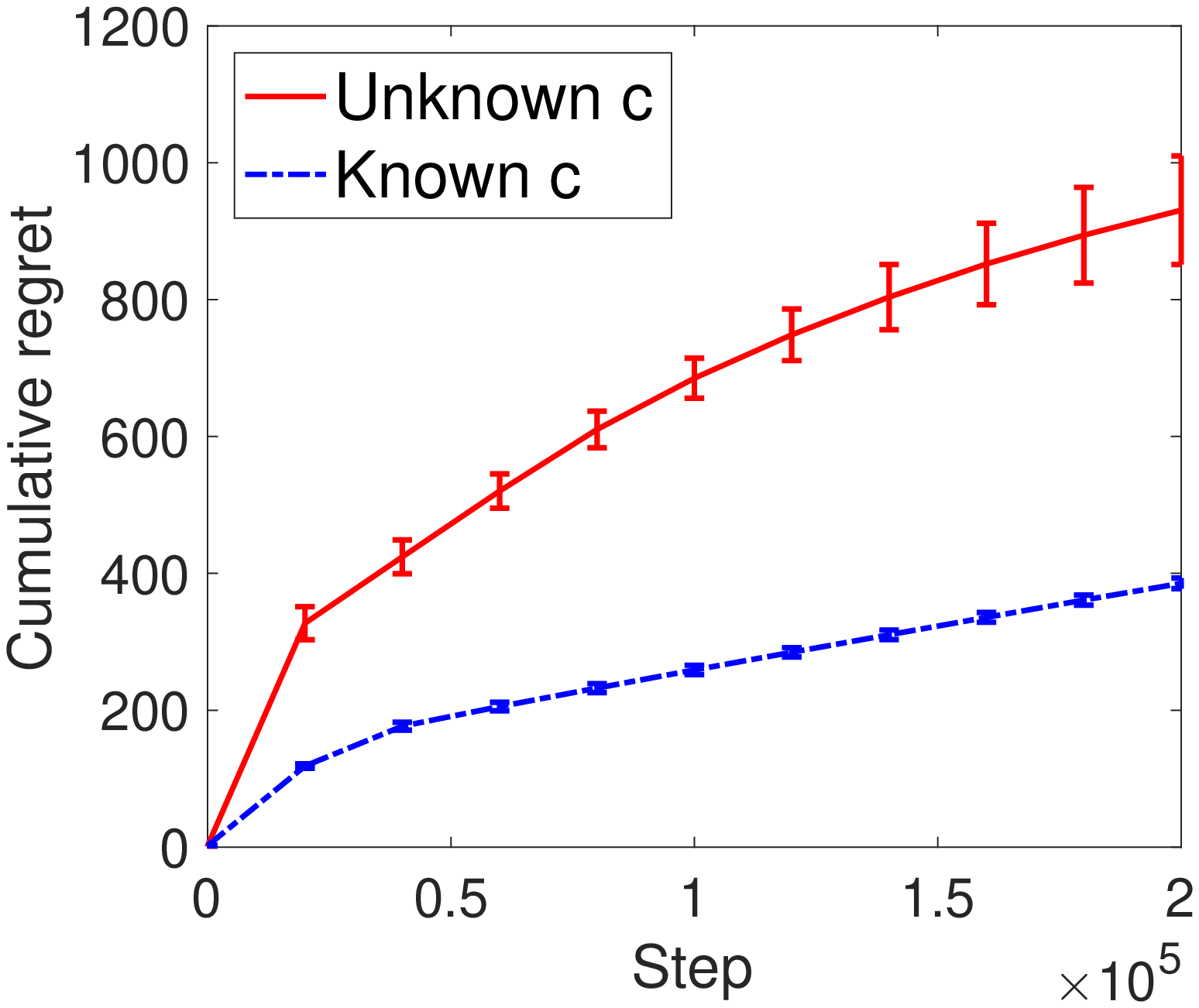}}
	\subfigure[Real-world data]{ \includegraphics[width=1.625in]{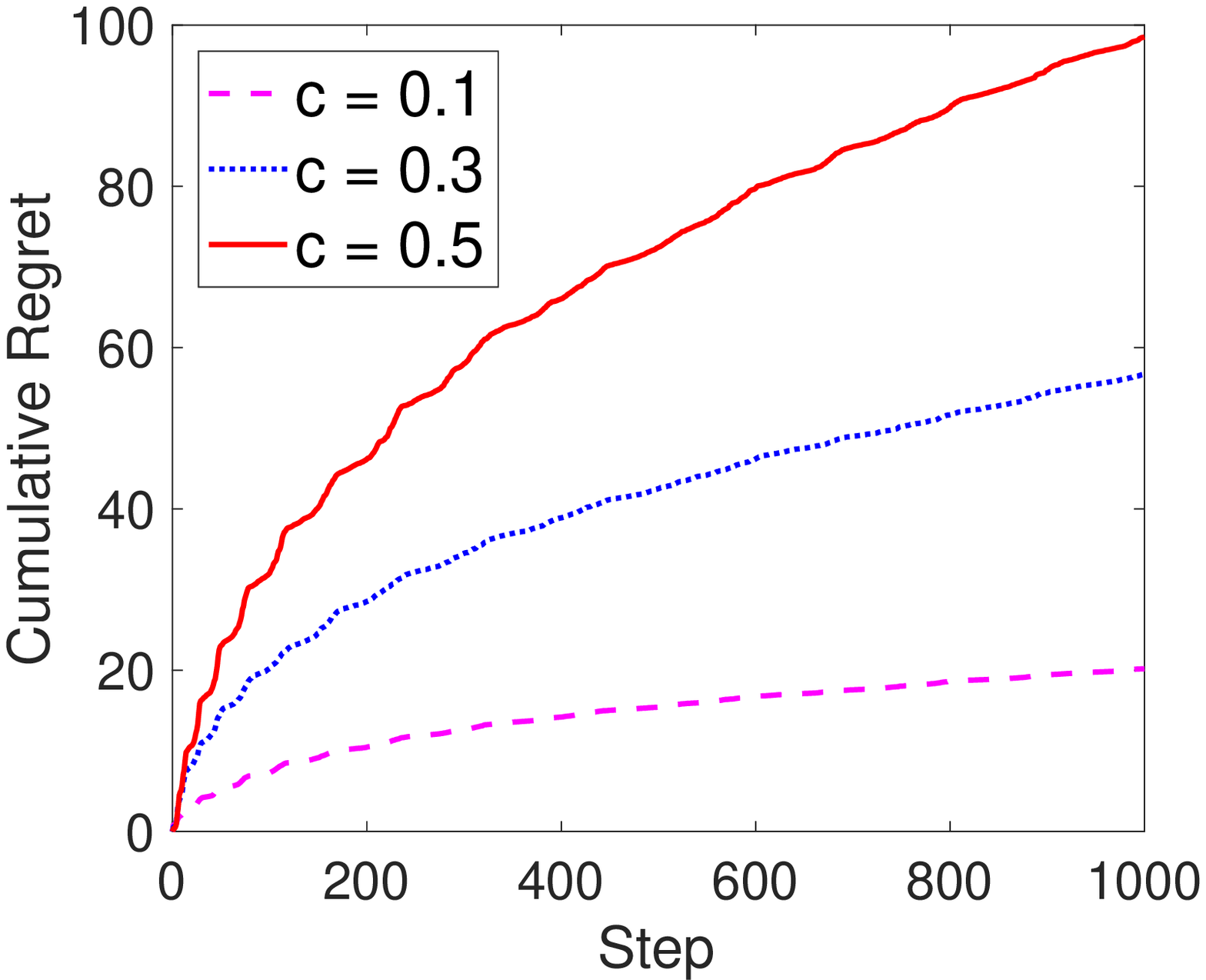}}
	\vspace{-0.1in}
	\caption{Cumulative regret versus step under CC-UCB.}
\label{fig:reg}
\end{figure}

\begin{table}[h]
\centering
\begin{tabular}{ccccc}
  \toprule[1.5pt]
  $K$ & $L$ & $\Delta_{L+1}$&  Known  $c$& Unknown $c$  \\
  \midrule
  $6$ & 1&  0.1& 580.3288& 2.2862e+03  \\
  $6$ &  3 & 0.1& 352.8772 & 1.4453e+03\\
  $6$ & 5 & 0.1&  117.5846 & 364.6771\\
  $12$ & 1 & 0.1&  2.5284e+03 & 1.0225e+04 \\
  $12$ &3 & 0.1& 1.2996e+03 & 4.8120e+03 \\
  $12$& 5 &0.1 &  387.8936  &  1.3728e+03\\
  $6$& 1 &  0.05 & 1.1536e+03 & 4.7941e+03\\
  $6$ &3 & 0.05 & 697.7550 & 1.4431e+03 \\
  $6$  & 5 & 0.05 &  160.7688& 212.0552 \\
  \bottomrule[1.5pt]
\end{tabular}
\caption{$T$-step regret in $T = 10^5$ steps.}\label{table:online}
\end{table}

\subsection{Real-world Data}
In this section, we test the proposed CC-UCB algorithm using real-world data extracted from the click log dataset of Yandex Challenge~\cite{InternetMathematics}. The click log contains complex user behaviors. To make it suitable for our algorithm, we extract data that contains click history of a specific query. We choose the total number of links for the query to be $15$ and set a constant and known cost $c$ for all of them. We estimate the probability that a user would click a link based on the dataset and use it as the ground truth $\theta_i$. We then test the CC-UCB based on the structure of the optimal offline policy. We plot the cumulative regret in Figure~\ref{fig:reg}(b) with different values of $c$. We observe that the cumulative regret grows sub-linearly in time, and monotonically increases as $c$ increases, which are consistent with Theorem~\ref{thm:online}. This indicates that the CC-UCB algorithm performs very well even when some of the assumptions (such as the i.i.d. evolution of arm states) we used to derive the performance bounds do not hold.

\section{Conclusions}
In this paper, we studied a CCB model by taking the cost of pulling arms into the cascading bandits framework. We first explicitly characterized the optimal offline policy UCR-T1, and then developed the CC-UCB algorithm for the online setting. We analyzed the regret behavior of the proposed CC-UCB algorithm by analyzing two different types of error events under the algorithm. We also derived an order-matching lower bound, thus proving that the proposed CC-UCB algorithm achieves order-optimal regret.  Experiments using both synthetic data and real-world data were carried out to evaluate the CC-UCB algorithm.

\bibliographystyle{named}

\appendix
\onecolumn
\section{Proof of Theorem 1}\label{app: thm1}
\if{0}
\setcounter{Theorem}{0}
\begin{Theorem}
Arrange the arm indices such that
\begin{align*}
\frac{\theta_{1^*}}{c_{1^*}} \geq \frac{\theta_{2^*}}{c_{2^*}} \geq \ldots\geq \frac{\theta_{L^*}}{c_{L^*}}
>1 >\frac{\theta_{(L+1)^*}}{ c_{(L+1)^*} }\geq \ldots \geq \frac{\theta_{K^*}}{ c_{K^*} }.
\end{align*}
Then, $I^*=\{1^*, 2^*, \ldots, L^*\}$, and the corresponding optimal per-step reward is
$ \ruida{\Eb[r^*]} = \sum_{i=1}^L(\theta_{i^*} - c_{i^*})\prod_{j=1}^{i-1}(1-\theta_{j^*}).$
\end{Theorem}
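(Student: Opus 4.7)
The plan is to prove Theorem~\ref{thm:offline} in two stages: first establishing the correct ranking among arms that are included via a local exchange argument, and then determining which arms belong in $I^*$ via an insertion/deletion argument. Throughout, the key observation is that the expected reward \eqref{eqn:netrew} has a very clean ``telescoping'' structure: the contribution of arm $I(i)$ involves only the arms strictly before it (through the product $\prod_{j<i}(1-\theta_{I(j)})$) and is multiplied into the expressions for arms strictly after it via the factor $(1-\theta_{I(i)})$. This local structure is exactly what makes adjacent-swap reasoning work.

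\textbf{Step 1 (Ordering via adjacent swap).} Fix any candidate list and consider two arms $a,b$ sitting at consecutive positions $i,i+1$. Let $P=\prod_{k<i}(1-\theta_{I(k)})$ denote the common prefix factor and let $S$ denote the common suffix contribution that follows positions $i,i+1$. The terms in \eqref{eqn:netrew} that depend on the relative order of $a,b$ are exactly
\begin{equation*}
P\bigl[(\theta_a - c_a) + (1-\theta_a)(\theta_b - c_b)\bigr] + P(1-\theta_a)(1-\theta_b)\cdot S,
\end{equation*}
with $a$ placed before $b$. Swapping produces the same expression with $a,b$ exchanged. The common suffix factor $P(1-\theta_a)(1-\theta_b)S$ is invariant under the swap, so the difference reduces to $P[\theta_a c_b - \theta_b c_a]$. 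Hence placing $a$ before $b$ is weakly better iff $\theta_a/c_a \geq \theta_b/c_b$. A standard bubble-sort argument then shows that any optimal list must be sorted in non-increasing order of $\theta_i/c_i$.

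\textbf{Step 2 (Inclusion/exclusion).} I would then argue that the optimal list contains exactly the arms with $\theta_i/c_i>1$. For exclusion: suppose some arm $a$ with $\theta_a\le c_a$ (equivalently $\theta_a/c_a\le 1$, and by Assumption~1 strictly less) sits at position $i$ of the list. Let $R'=\sum_{j>i}(\theta_{I(j)}-c_{I(j)})\prod_{i<k<j}(1-\theta_{I(k)})$ denote the ``tail value'' after $a$. Deleting $a$ from the list changes the expected reward by
\begin{equation*}
P\bigl[(c_a - \theta_a) + \theta_a R'\bigr].
\end{equation*}
If the original list was optimal then the sub-list starting at position $i+1$ must be at least as good as the empty suffix, i.e.\ $R'\ge 0$; combined with $c_a>\theta_a\ge 0$ this quantity is strictly positive, contradicting optimality. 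For inclusion: if an arm $a$ with $\theta_a>c_a$ is absent, appending $a$ at the end of the list adds $\bigl(\prod_{k}(1-\theta_{I(k)})\bigr)(\theta_a - c_a)>0$ to the expected reward, again contradicting optimality.

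\textbf{Step 3 (Conclusion).} Combining Steps 1 and 2 shows that $I^*$ must consist of precisely the arms with $\theta_i/c_i>1$, arranged in descending order of $\theta_i/c_i$, which is the claimed UCR-T1 structure; plugging this list back into \eqref{eqn:netrew} gives the stated expression for $\Eb[r^*]$. The main subtlety I anticipate is being careful in Step 2 that the nonnegativity $R'\ge 0$ really follows from optimality (otherwise one could have truncated the list earlier and done strictly better, so the inequality holds for any optimal list), and in Step 1 that the adjacent-swap argument cleanly covers all pairs so that a full sorting follows. Neither presents a genuine obstacle; the proof is essentially a careful accounting exercise enabled by the product-form reward.
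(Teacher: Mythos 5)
Your proposal is correct and follows essentially the same route as the paper's proof: the adjacent-swap computation yielding the factor $\theta_a c_b - \theta_b c_a$ is identical to the paper's Equation for $\Eb[r(I')]-\Eb[r(I)]$, and the inclusion/exclusion of arms by the sign of $\theta_i - c_i$ matches the paper's second step. Your explicit deletion argument with the tail value $R'\ge 0$ is in fact slightly more careful than the paper's terse treatment of exclusion, but it is the same underlying idea, not a different method.
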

\fi

Let $I = (I(1),I(2),\ldots,I(|I|))$ be the list that \congr{is} presented by the player, where $|I| \leq K$. The expected net reward is
\begin{eqnarray}
\Eb[r(I)] = \sum_{i=1}^k( \theta_{I(i)} - c_{I(i)})\prod_{j=1}^{i-1}(1-\theta_{I(j)}).
\end{eqnarray}
If \congr{in the policy there} exists $i \in I$ satisfying $\frac{\theta_{I(i+1)}}{ c_{I(i+1)}} > \frac{\theta_{I(i)}}{c_{I(i)}}$ and $I(i),I(i+1) \in [K]$\congr{, then we can} define another list $I'$, which is \congr{only} different from policy $I$ by swapping \congr{the} $I(i)$ and $I(i+1)$ position\congr{:} $I' = (I(1),\ldots,I(i-1),I(i+1),I(i),I(i+2),\ldots,I(|I|))$, \congr{and} $|I'| = |I|$. Then the \congr{difference} between \congr{the} expected net rewards of $I$ and $I'$ is
\begin{align}
&\Eb[r(I')] - \Eb[r(I)] \nonumber\\
&= \left(\prod_{j=1}^{i-1} (1-\theta_{I(j)})\right){\big [}(\theta_{I(i+1)}-c_{I(i+1)})+(1- \theta_{I(i+1)})(\theta_{I(i)}-c_{I(i)}) - (\theta_{I(i)} -c_{I(i)}) - (1-\theta_{I(i)})(\theta_{I(i+1)} -c_{I(i+1)}){\big ]}  \nonumber \\
&=\left(\prod_{j=1}^{i-1}(1-\theta_{I(j)})\right) (c_{I(i)}\theta_{I(i+1)} - c_{I(i+1)}\theta_{I(i)}) \nonumber \\
&> 0, \label{eqn:swap1}
\end{align}
which implies that if \congr{the} presented arms \congr{from} $[K]$ \congr{are} not \congr{in a descending order in $\frac{\theta_i}{c_i}$}, then \congr{we can always create a new list that achieves better expected net reward by swapping positions of some arms.}

Besides, the reward is \congr{the} summation of $(\theta_{I{(i)}} - c_{I{(i)}})\prod_{j=1}^{i-1}(1-\theta_{I{(j)}})$. \congr{Then a term} will \congr{be positive} if $\theta_{I{(i)}} > c_{I{(i)}}$. \congr{As a result}, the optimal offline policy must contain all $i: \theta_i > c_i$. Combining with (\ref{eqn:swap1}) , we \congr{reach} the conclusion that the reward will \congr{be maximized by} presenting the top $L$ arms in a descending order based on $\frac{\theta_i}{c_i}$.

\section{Proof of Lemma 2}\label{app: lem2}
\if{0}
\setcounter{Lemma}{1}
\begin{Lemma}\label{lemma:e_t}
Under Algorithm~\ref{alg:online2}, we have
$\sum_{t=1}^T \Eb[\mathds{1}(\Ec_t)] \leq \psi:= K\left(1+\frac{\jing{4}\pi^2}{3}\right) $. 
\end{Lemma}
\fi
We note that
\begin{align}
&\sum_{t=1}^T \Eb[\mathds{1}(\Ec_t)] \nonumber \\
&\leq K +\sum_{t=K+1}^T \sum_{k \in [K]}\Big(\Pb\left[|\hat\theta_k(t) - \theta_k|>u_{k,t}\right]  +  \Pb\left[|\hat{c}_k(t) - c_k|>u_{k,t}\right]\Big) \\
&= K +\sum_{k \in [K]}\sum_{t=K+1}^T \sum_{n = 0}^{t} \Bigg(\Pb\left[|\hat{\theta}_k(t) - \theta_k|> \sqrt{\frac{\alpha \log t}{N_k(t)}}, N_k(t)=n \right]+\Pb\left[|\hat{c}_k(t) - c_k|> \sqrt{\frac{\alpha \log t}{N_k(t)}}, N_k(t)=n  \right] \Bigg) \nonumber \\
&\leq K + \sum_{k \in [K]}\sum_{t=K+1}^T \sum_{n = 1}^{t} 4\exp \left( -2 \frac{\alpha \log t}{n} n\right)	\label{eqn:hoeff21}\\
&= K +4 \sum_{k \in [K]} \sum_{t=K+1}^T t^{-2\alpha + 1} \leq K + K\frac{4\pi^2}{3} := \psi \label{eqn:confidence_bound}
\end{align}
where (\ref{eqn:hoeff21}) follows from the Hoeffding's inequality.

\section{Proof of Lemma~3}\label{appx:lemma:B_t}
Before we proceed to prove Lemma 3, we first introduce the following definitions.

Define a random variable $Z_{i,t}$ as follows:
  \begin{align}\label{defn:z_i}
Z_{i,t}&=\left\{ \begin{array}{ll}
 0, & \mbox{ if }\mathds{1}(\bar\Ec_{t})=0\\
 0, &\mbox{ if }\mathds{1}(\bar\Ec_{t})=1, ~\mbox{and}~\exists j \in [K]\backslash\{i\}, ~X_{j,t}=1\\ 
 1,  & \mbox{ if }\mathds{1}(\bar\Ec_{t})=1,  ~\mbox{and}~\forall j \in [K]\backslash\{i\}, ~X_{j,t}=0
 \end{array}\right..
 \end{align}
Denote $p_i := \frac{\prod_{j=1}^K(1-\theta_j)}{(1-\theta_i)}$. We can verify that
 \begin{align} \label{def:z_alt}
 Z_{i,t} & = \left\{ \begin{array}{ll}
 0, & \mbox{ if }\mathds{1}(\bar\Ec_{t})=0\\
 \mbox{Bernoulli } (p_i), &\mbox{ if }\mathds{1}(\bar\Ec_{t})=1
  \end{array}\right..
\end{align} 

As we will see later, we define $Z_{i,t}$ in such a way in order to lower bound the probability of the event ``$\Ec_t$ is false and arm $i$ is observed".

For any integer $n$, we define $\tau_n$ as the smallest step index such that $\sum_{t=1}^{\tau_n}\mathds{1}(\bar\Ec_t)=n$. This definition implies that $\mathds{1}(\bar\Ec_{\tau_n})=1$. Then, according to the definition of $Z_{i,t}$ in~(\ref{def:z_alt}), $Z_{i,\tau_n}$ is Bernoulli( $p_i$). Since $Z_{i, \tau_n}$ is $\sigma(\{X_{k,\tau_n}\}_{k\in[K]})$ measurable and $\{X_{k,t}\}_{k\in[K]}$ are independent, $\{Z_{i, \tau_n}\}_{n}$ are independent. Therefore, $\{Z_{t,\tau_n}\}_{n=1}^\infty$ are i.i.d. Bernoulli random variables with parameter $p_i$.

We then denote $\Gamma_T:=\sum_{t=1}^T\mathds{1}(\bar\Ec_t)$, i.e., the total number of steps up to $T$ when $\Ec_t$ is false. Then, we have the following observation.
\setcounter{Lemma}{5}
\begin{Lemma}\label{lemma:N_T_bound}
For all $i\in I^*$, $N_{i,T} \geq \sum_{t=1}^TZ_{i,t} =\sum_{n=1}^{\Gamma_T} Z_{i,\tau_n}.$
 \end{Lemma}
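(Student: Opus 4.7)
The plan is to reduce the statement to a stepwise inequality $\mathds{1}(i \in \tilde{I}_t) \geq Z_{i,t}$, sum over $t$, and then dispose of the second equality by the definition of $\tau_n$. Since $N_{i,T} = \sum_{t=1}^{T} \mathds{1}(i \in \tilde{I}_t)$, the stepwise bound gives $N_{i,T} \geq \sum_{t=1}^{T} Z_{i,t}$ immediately. The stepwise inequality is trivial when $Z_{i,t} = 0$, so the only case that requires argument is $Z_{i,t} = 1$.

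Suppose $Z_{i,t} = 1$. From definition~(\ref{defn:z_i}), this means (a) $\mathds{1}(\bar\Ec_t) = 1$, and (b) $X_{j,t} = 0$ for every $j \in [K] \setminus \{i\}$. From (a) and Lemma~\ref{lemma:confidence}, every arm of $I^*$ is placed on $I_t$; since $i \in I^*$ by hypothesis, in particular $i \in I_t$, so arm $i$ appears at some position on the ordered list the learner will examine in step $t$. Now Algorithm~\ref{alg:online2} pulls the arms of $I_t$ sequentially and only halts once it observes a state equal to $1$. Every arm preceding $i$ on $I_t$ lies in $[K] \setminus \{i\}$, hence by (b) has state $0$ at step $t$, so the cascade cannot terminate before arm $i$ is reached. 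Therefore $i \in \tilde{I}_t$, i.e., $\mathds{1}(i \in \tilde{I}_t) = 1 \geq Z_{i,t}$, as required.

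For the second equality, observe from definition~(\ref{defn:z_i}) that $Z_{i,t} = 0$ on every step where $\mathds{1}(\bar\Ec_t) = 0$. The indices $t \in \{1, \ldots, T\}$ for which $\mathds{1}(\bar\Ec_t) = 1$ are, by construction, exactly $\tau_1, \tau_2, \ldots, \tau_{\Gamma_T}$, so the sum $\sum_{t=1}^{T} Z_{i,t}$ collapses to $\sum_{n=1}^{\Gamma_T} Z_{i,\tau_n}$. There is no substantive obstacle here: the auxiliary variable $Z_{i,t}$ was engineered in~(\ref{defn:z_i}) precisely so that ``$Z_{i,t} = 1$'' implies both the inclusion of $i$ on the list (via Lemma~\ref{lemma:confidence}) and the non-termination of the cascade before reaching $i$ (via the requirement that every other arm have state $0$); the remainder is bookkeeping on the indexing by $\tau_n$.
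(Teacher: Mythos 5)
Your proof is correct and follows essentially the same route as the paper's: both reduce the claim to the pathwise inequality $\mathds{1}(\bar\Ec_t)\,\mathds{1}(i\in\tilde I_t)\ \geq\ \mathds{1}(\bar\Ec_t)\,\mathds{1}(\forall j\neq i,\ X_{j,t}=0)$, justified by the cascade not terminating before reaching arm $i$, and then re-index the surviving terms by $\tau_1,\dots,\tau_{\Gamma_T}$. If anything, you are slightly more careful than the paper in explicitly invoking Lemma~\ref{lemma:confidence} to guarantee $i\in I_t$ and in phrasing the key step as an eventwise (rather than probabilistic) comparison.
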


 \begin{Proof} 
 We note that
 \begin{align}
 N_{i,T} &= \sum_{t=1}^T \mathds{1}(i \in \tilde{I}_t) \\
 &\geq \sum_{t=1}^T \mathds{1}(\bar\Ec_t) \mathds{1}(i \in \tilde{I}_t) \\
 &\geq \sum_{t=1}^T \mathds{1}(\bar\Ec_t)\mathds{1}\left(\forall j \in [K] \backslash \{i\}, X_{j,t}=0 \right)\label{eqn:num_lower} \\
 &= \sum_{t=1}^T Z_{i,t} \label{eqn:def_z_i}
 \end{align}
 where (\ref{eqn:num_lower}) is based on the fact that arm $i$ will be pulled only when the states of all arms in $I_t$ listed before $i$ are $0$, and its probability is lower bounded by that of the extreme case when the states of all arms except $i$ are $0$. (\ref{eqn:def_z_i}) follows from the definition of $Z_{i,t}$ in (\ref{defn:z_i}). 
 
Since $\mathds{1}(\bar\Ec_{t})= \mathds{1} \left( t \in \bigcup_{n=1}^{\infty} \{ \tau_n\} \right)$, we have $\sum_{t=1}^TZ_{i,t} = \sum_{n=1}^{\Gamma_T}Z_{i,\tau_n}$, and Lemma~\ref{lemma:N_T_bound} follows.

 \end{Proof}
 
 Next, we are ready to prove Lemma~\ref{lemma:B_t}.
 
 \if{0}
 \setcounter{Lemma}{2}
 \begin{Lemma}\label{lemma:B_t}
$\Eb\left[\sum_{t=1}^T\mathds{1}(\bar\Ec_{t}) \mathds{1}(\Bc_{t})\right] \leq \zeta =O(1)$.
\end{Lemma}
\fi

Denote $\Delta_{i,j} := \frac{\left(\frac{\theta_i}{c_i} - \frac{\theta_j}{c_j}\right)c_j}{2\left(1+\frac{\theta_j}{c_j}\right)}$. Then, we have
\begin{align}
&\Eb\left[\sum_{t=1}^T\mathds{1}(\bar\Ec_{t}) \mathds{1}(\Bc_{t})\right] \nonumber \\
&\leq \sum_{j=2}^L \Eb \left[\sum_{t=1}^T \mathds{1}(\bar \Ec_t) \mathds{1}\left(\frac{U_{j^*,t}}{L_{j^*,t}} > \frac{\theta_{(j-1)^*}}{c_{(j-1)^*}}\right) \right]\\
&\leq \sum_{j=2}^L \Eb \left[\sum_{t=1}^T \mathds{1}(\bar \Ec_t) \mathds{1}\left(\frac{\theta_{j^*}+2u_{j^*,t}}{c_{j^*}-2u_{j^*,t}} > \frac{\theta_{(j-1)^*}}{c_{(j-1)^*}}\right) \right]\\
&= \sum_{j=2}^L\Eb\left[\sum_{t=1}^T  \mathds{1}(\bar \Ec_t) \mathds{1}\left(N_{j^*,t} < \frac{4 \left(1+\frac{\theta_{(j-1)^*}}{c_{(j-1)^*}}\right)^2\alpha \log t}{\left(\frac{\theta_{(j-1)^*}}{c_{(j-1)^*}} - \frac{\theta_{j^*}}{c_{j^*}}\right)^2 c_{j^*}^2} \right) \right]\\
&= \sum_{j=2}^L \Eb\left[\sum_{n=1}^{\Gamma_T} \mathds{1}\left(N_{j^*,\tau_n} < \frac{ \alpha \log \tau_n}{\Delta^{2}_{(j-1)^*,j^*}}\right) \right]\\
&= \sum_{j=2}^L \Eb\Bigg[\sum_{n=1}^{\Gamma_T} \mathds{1}\left(N_{j^*,\tau_n} < \frac{ \alpha \log \tau_n}{\Delta^{2}_{(j-1)^*,j^*}}\right) \Big(\mathds{1}\left(\tau_n \leq 2n\right) + \mathds{1}(\tau_n > 2n)\Big) \Bigg] \label{eqn:lemma3-0}
\end{align}

We note that 
\begin{align}
&\sum_{j=2}^L \Eb\Bigg[\sum_{n=1}^{\Gamma_T}  \mathds{1}\left(N_{j^*,\tau_n} < \frac{ \alpha \log \tau_n}{\Delta^{2}_{(j-1)^*,j^*}}\right) \mathds{1}\left(\tau_n \leq 2n\right) \Bigg] \nonumber \\
&\leq \sum_{j=2}^L \Eb\Bigg[\sum_{n=1}^{\Gamma_T} \mathds{1}\left(N_{j^*,\tau_n} < \frac{ \alpha (\log n + \log 2)}{\Delta^{2}_{(j-1)^*,j^*}}\right) \Bigg]\\
&\jing{\leq} \sum_{j=2}^L \Eb\Bigg[\sum_{n=1}^{\Gamma_T} \mathds{1}\left( \sum_{t=1}^{\tau_n}Z_{j^*,t} < \frac{ \alpha (\log n + \log 2)}{\Delta^{2}_{(j-1)^*,j^*}}\right)\Bigg] \label{eqn:lemma3-1} \\
&\ruida{\leq \sum_{j=2}^L \Eb\Bigg[\sum_{n=1}^{T} \mathds{1}\left( \sum_{t=1}^{n}Z_{j^*,\tau_t} < \frac{ \alpha (\log n + \log 2)}{\Delta^{2}_{(j-1)^*,j^*}}\right)\Bigg]} \label{eqn:lemma3-2} \\
&\ruida{=} \sum_{j=2}^L \sum_{n=1}^{\ruida{T}} \Pb \left( \sum_{t=1}^{n}Z_{j^*,\ruida{\tau_t}} - n p_{j^*}  < \frac{ \alpha (\log n + \log 2)}{\Delta^{2}_{(j-1)^*,j^*}} - n p_{j^*}\right)\\
&\ruida{\leq \sum_{j=2}^L\left( \zeta_j + \sum_{n=\zeta_j+1}^{\ruida{T}}\Pb \left( \sum_{t=1}^{n}Z_{j^*,\ruida{\tau_t}} - n p_{j^*}  < -\frac{p_{j^*}}{2} n \right) \right) \label{eqn:lemma3-3}}\\
&\leq \sum_{j=2}^L\left( \zeta_j + \sum_{n=\zeta_j+1}^{\ruida{T}}\exp\left(-2 \left(\ruida{\frac{p_{j^*}}{2}} \right)^2 n\right) \right) \label{eqn:lemma3-4}\\
&\leq \sum_{j=2}^L\left(  \zeta_j + \frac{2}{p_{j^*}^2}\right)\label{eqn:lemma3-4-2}
\end{align}
where (\ref{eqn:lemma3-1}) follows Lemma~\ref{lemma:N_T_bound}, (\ref{eqn:lemma3-2}) follows the fact that $\Gamma_T \leq T$. $\zeta_j$ in (\ref{eqn:lemma3-3}) is defined as the maximum $n$ such that  $\frac{ \alpha (\log n + \log 2)}{\Delta^{2}_{(j-1)^*,j^*}} \geq \frac{p_{j^*}}{2} n $. (\ref{eqn:lemma3-4}) follows from the fact that $Z_{j^*,\ruida{\tau_t}}$ are i.i.d. Bernoulli random variables with parameter $p_i$ and Hoeffding's inequality.

Besides,
\begin{align}
&\sum_{j=2}^L \Eb\Bigg[\sum_{n=1}^{\Gamma_T} \mathds{1}\left(N_{j^*,\tau_n} < \frac{ \alpha \log \tau_n}{\Delta^{2}_{(j-1)^*,j^*}}\right)\mathds{1}\left(\tau_n > 2n \right) \Bigg] \label{eqn:second_term} \\
&\leq \sum_{j=2}^L \Eb\Bigg[\sum_{n=1}^{\Gamma_T}\mathds{1}\left(\tau_n > 2n \right) \Bigg]  \\
&=\sum_{j=2}^L \Eb\Bigg[\sum_{n=1}^{\Gamma_T}\mathds{1}\left(\frac{\tau_n}{2} > \sum_{s=1}^{\tau_n}\mathds{1}(\bar\Ec_t) \right) \Bigg]\label{eqn:lemma3-5} \\
&\leq \sum_{j=2}^L \Eb\Bigg[\sum_{t=1}^{T}\mathds{1}\left(\frac{ t}{2} > \sum_{s=1}^t\mathds{1}(\bar\Ec_t)\right) \Bigg] \label{eqn:lemma3-6} \\
&= \sum_{j=2}^L \sum_{t=1}^{T}\Pb\Bigg[ \sum_{s=1}^t\mathds{1}(\Ec_s) >\frac{ t}{2} \Bigg] \label{eqn:lemma3-7}  
\end{align}
where (\ref{eqn:lemma3-5}) is based on the definition of $\tau_n$, (\ref{eqn:lemma3-6}) is due to the fact that $\Gamma_T \leq T$, thus $\sum_{n=1}^{\Gamma_T}\mathds{1}\left(\frac{\tau_n}{2} > \sum_{s=1}^{\tau_n}\mathds{1}(\bar\Ec_t) \right) \leq \sum_{t=1}^{T}\mathds{1}\left(\frac{ t}{2} > \sum_{s=1}^t\mathds{1}(\bar\Ec_t)\right) $.

We note that
\begin{align}
&\Eb[\mathds{1}(\Ec_t)] \leq\sum_{k \in [K]}\Big(\Pb\left[|\hat\theta_{k,t} - \theta_k|>u_{k,t}\right] +  \Pb\left[|\hat{c}_{k,t} - c_k|>u_{k,t}\right]\Big) \\
&= \sum_{k \in [K]}\sum_{n = 1}^{t} \Bigg(\Pb\left[|\hat{\theta}_{k,t} - \theta_k|> \sqrt{\frac{\alpha \log t}{N_{k,t}}}, N_{k,t}=n \right] +\Pb\left[|\hat{c}_{k,t} - c_k|> \sqrt{\frac{\alpha \log t}{N_{k,t}}}, N_{k,t}=n  \right] \Bigg)  \\
&\leq \sum_{k \in [K]} \sum_{n = 1}^{t} 4\exp \left( -2 \frac{\alpha \log t}{n} n\right)	\label{eqn:hoeff2}\\
&= 4 \sum_{k \in [K]}t^{-2\alpha + 1} = \frac{4K}{t^{2\alpha-1}} = \frac{4K}{t^2}. \label{eqn:littleProb}
\end{align}
Therefore,
\begin{align}
&\Pb\left[ \sum_{s=1}^t \mathds{1}(\Ec_s) >\frac{ t}{2} \right] \leq \frac{ \Eb\left[\left(\sum_{s=1}^{T}\mathds{1}(\Ec_s) \right)^2 \right]}{\left(t/2\right)^2} \label{eqn:chevbyshev} \\
&= \frac{4}{t^2} \left(\sum_{s=1}^t \Eb\left[\mathds{1}(\Ec_s)\right] + 2\sum_{1\leq i < j \leq t} \Eb \left[ \mathds{1}(\Ec_i)\mathds{1}(\Ec_j)\right] \right)  \\
&\leq \frac{4}{t^2} \left(\sum_{s=1}^t \Eb\left[\mathds{1}(\Ec_s)\right] + 2\sum_{1\leq i < j \leq t} \sqrt{\Eb\left[ \mathds{1}(\Ec_i)^2 \right]\Eb\left[ \mathds{1}(\Ec_j)^2 \right] } \right) \label{eqn:cauchy} \\
&\leq \frac{4}{t^2} \left(\sum_{s=1}^t\frac{4K}{s^2} + 2\sum_{1\leq i < j \leq t} \frac{4K}{ij}\right)\label{eqn:lemma3-8} \\
&= \frac{16K}{t^2} \left( \sum_{s=1}^t \frac{1}{s}\right)^2 < 16K \left( \frac{\log t + 1}{t} \right)^2 \label{eqn:as}
\end{align}
where (\ref{eqn:chevbyshev}) follows from Chebyshev's inequality, (\ref{eqn:cauchy}) follows from Cauchy's inequality, and (\ref{eqn:lemma3-8}) follows from (\ref{eqn:littleProb}).

Plugging (\ref{eqn:as}) into (\ref{eqn:lemma3-7}), we have
\begin{align}
(\ref{eqn:second_term})&\leq (L-1)\sum_{t=1}^{\infty}  16K \left( \frac{\log t + 1}{t} \right)^2 \nonumber \\
&< \jing{16}K\left(\frac{\pi^2}{6} + 1 + \log 2 + \frac{1}{3}(2 + \log^2 3 + 2\log 3) \right) :=\xi_0 \label{eqn:lin_bound}.
\end{align}

Plugging (\ref{eqn:lemma3-4-2}) and (\ref{eqn:lin_bound}) into (\ref{eqn:lemma3-0}), we have
\begin{align}
\Eb\left[\sum_{t=1}^T\mathds{1}(\bar\Ec_{t}) \mathds{1}(\Bc_{t})\right] \leq \sum_{j=2}^L\left(  \zeta_j + \frac{1}{2p_{j^*}^2}\right) + \xi_0 := \xi = O(1).
\end{align}

\end{document}